\begin{document}
	

	\title{Fine-grained Pattern Matching Over Streaming Time Series}
	
		\author{
		\alignauthor
		Rong Kang$^{\S}$  Chen Wang$^{\S}$ Peng Wang$^{\ddag}$ Yuting Ding$^{\S}$ Jianmin Wang$^{\S}$\\
		\affaddr{$^{\S}$Tsinghua University, China}\\
		\affaddr{\{kr11, dingyt16\}@mails.tsinghua.edu.cn \{wang\_chen, jimwang\}@tsinghua.edu.cn}\\
		\affaddr{$ ^{\ddag} $ Fudan University, China $ \quad $pengwang5@fudan.edu.cn}
}
	\maketitle
	
	\begin{abstract}
		Pattern matching of streaming time series with lower latency under limited computing resource comes to a critical problem, especially as the growth of Industry 4.0 and Industry Internet of Things. However, against traditional single pattern matching problem, a pattern may contain multiple segments representing different statistical properties or physical meanings for more precise and expressive matching in real world.
		Hence, we formulate a new problem, called ``fine-grained pattern matching'', which allows users to specify varied granularities of matching deviation to different segments of a given pattern, and fuzzy regions for adaptive breakpoints determination between consecutive segments. 
		In this paper, we propose a novel two-phase approach. In the pruning phase, we introduce Equal-Length Block (ELB) representation together with Block-Skipping Pruning (BSP) policy, which guarantees low cost feature calculation, effective pruning and no false dismissals. 
		In the post-processing phase, a delta-function is proposed to enable us to
		conduct exact matching in linear complexity. 
		Extensive experiments are conducted to evaluate on synthetic and real-world datasets, which illustrates that our algorithm outperforms the brute-force method and MSM, a multi-step filter mechanism over the multi-scaled representation.
		
	\end{abstract}
	
	\section{Introduction}
\label{sec:introduction}
%

Time series are widely available in diverse application areas, such as Healthcare \cite{wei_atomic_2005}, financial data analysis \cite{wu_online_2004} and sensor network monitoring \cite{zhu_efficient_2003}, and they turn the interests on spanning from developing time series database \cite{jensen_time_2017}.
In recent years, the rampant growth of Industry 4.0 and Industry Internet of Things, especially the development of intelligent control and fault prevention to complex equipment on the edge, urges more challenging demands to process and analyze streaming time series from industrial sensors with low latency under limited computing resource \cite{zhao_adaptive_2014}.

As a typical workload, similarity matching over streaming time series has been widely studied for fault detection, pattern identification and trend prediction, where accuracy and efficiency are the two most important measurements to matching algorithms \cite{bulut_unified_2005, lian_similarity_2007}. Given a single or a set of patterns and a pre-defined threshold, traditional similarity matching algorithms aim to find matched subsequences over incoming streaming time series, between which the distance is less than the threshold. However, in certain scenarios, this single granularity model is not expressive enough to satisfy the similarity measurement requirements.

The data characteristics of time series can change due to external events or internal systematic variance, which is well explored as change point detection problem \cite{aminikhanghahi_survey_2017}. As the target which users intend to detect in streaming time series, the pattern often contains one or more change points. Moreover, the segments split by change points may have differentiated matching granularities. Let us consider the following example.

\begin{figure*}[!htb]
	\centering
	\includegraphics[width=1\textwidth]{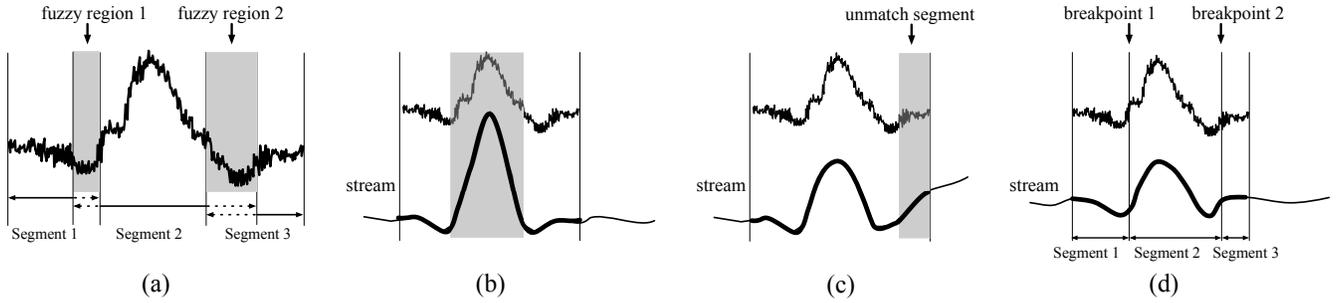}
	\caption{
		(a) The pattern of Extreme Operating Gust(EOG) contains three segments: a slight decrease (Segment 1), followed by a steep rise, a steep drop (Segment 2), and a rise back to the original value (Segment 3).
		(b) A positive result whose Segment 2 has higher amplitude than the pattern.
		(c) A negative result whose Segment 3 is slightly different from the pattern.
		(d) For a positive result, the runtime matching process will bring a clear cut for each segment.
	}
	\label{fig:intro_apply}
\end{figure*}
In the field of wind power generation,
LIDAR systems \cite{schlipf_nonlinear_2013} are able to provide preview information of wind disturbances ahead of wind turbines. For example, Extreme Operating Gust (EOG) \cite{branlard_wind_2009} is a typical gust pattern which is a phenomenon of dramatic changes of wind speed in a short period (1s $ \sim $ 10min). Early detection of EOG can prevent the damage on the turbine  \cite{pace_lidar-based_2012}.
As shown in Figure~\ref{fig:intro_apply}(a), a typical pattern of EOG has three physical phases, and its corresponding shape contains a slight decrease (Segment 1), followed by a steep rise, a steep drop (Segment 2), and a rise back to the original value (Segment 3).
In practice, the Segment 2 is a unique feature of EOG pattern, where the peak is significantly higher than normal value. However, users usually emphasize the shape feature much more than what the exact value is. In other words, the deviation of distance measurement is more tolerant. Yet the matching tolerances of Segment 1 and Segment 3 are stricter to identify the slight decrease and stable trend respectively (after subtracting the level of time series). Figure~\ref{fig:intro_apply}(b) and (c) show positive and negative results respectively. In Figure~\ref{fig:intro_apply}(b), although the segment 2 of the sequence  has much higher amplitude than the pattern, due to the more tolerance of this segment, it's still a positive result. On the contrary, although the sequence in Figure~\ref{fig:intro_apply}(c) is only slightly different from the segment 3 of the pattern, it is determined as a negative result because the matching of Segment 3 is stricter.

This time series pattern has two physical status changes, from Segment 1 to Segment 2 and Segment 2 to Segment 3, but even the experienced expert is hard to locate the exact breakpoint of consecutive segments. In essence, the physical status switch usually associates with a transition period in between, which is obscure to be classified. Hence, it is more reasonable for users to specify a region rather than a single point as the segment boundary (illustrated by two fuzzy regions in Figure~\ref{fig:intro_apply}(a)) when defining the pattern. Within this region, any element is acceptable as the breakpoint, but it is dynamically determined with the incoming streaming time series during the matching process. Figure~\ref{fig:intro_apply}(d) demonstrates a positive result, the runtime matching process will bring a clear cut for each segment.

In summary, above example shows that a pattern may contain multiple segments representing different statistical properties or physical meanings, and users may want to specify different thresholds for matching distance measurement and elastic boundaries to separate the segments. Besides industrial examples, there are similar situations in other fields of streaming time series pattern matching, such as electrocardiogram in Healthcare, technique analysis based program trading in stock market, etc. Therefore, we formulate a new problem, \textit{fine-grained} pattern matching over streaming time series, which allows users to specify varied granularities of matching deviation to different segments of a given pattern, and fuzzy regions for adaptive breakpoint determination between consecutive segments.

Given the multiple segments and multiple thresholds settings, comparing with traditional stream pattern matching problem, the technical challenges are mainly caused by the dynamic runtime breakpoint determination. In the pruning phase, the traditional dimension reduction approaches cannot be simply applied due to varied lengths of segments. Therefore, the first challenge is the effective feature extraction approach to tolerate adaptive segment boundary and guarantee no false dismissals. During the post-processing phase, given quadratic possibilities of begin point and end point in fuzzy regions, an efficient post-processing algorithm is the second challenging job.

For the same reason, this new problem cannot be trivially simplified as a series of single pattern matching and the position alignment of these segments afterwards. Because given the existence of fuzzy breakpoint, the dynamic boundary determination of consecutive segments may cause the change of average deviation as the new streaming data coming in, which results in the ineffectiveness of pre-segmentation. This is analyzed in detail in section~\ref{sec:discussion}.

Although many techniques have been proposed for time series similarity matching, they do not aim to solve the aforementioned challenges. For streaming time series matching, some recent works take advantage of similarity or correlation of multiple patterns and avoid the whole matching of every single patterns~\cite{lian_similarity_2007, wei_atomic_2005}. Similarly, most of previous approaches for subsequence similarity search explore and index the commonalities of time series in database to accelerate the query \cite{loh_subsequence_2004, wang_data-adaptive_2013}. Both of them are not optimized for the fine-grained pattern matching scenario.

In this paper, we propose a novel algorithm to tackle the fine-grained pattern matching challenge. Equal-Length Block (ELB) representation is proposed to divide both the pattern and the stream into equal-length disjoint blocks, and then featurize a pattern block as a bounding box and a window block as a single value, which guarantees low cost feature calculation for newly arrival data and no false dismissals during the pruning phase. In the post-processing phase, we propose a delta-function representing the deviation of distance between pattern and candidate from the pre-defined threshold, which enables us to conduct exact matching and determine adaptive breakpoints in linear complexity. Considering the observation that elements appear in a series of consecutive windows in stream manner matching, we propose Block-Skipping Pruning (BSP) policy to optimize the pruning phase by leveraging the overlapping relation between consecutive sliding windows.

In summary, this paper makes the following contributions:
\begin{itemize}
	\item We define a new fine-grained similarity matching problem to more precisely match streaming time series and given pattern, allowing multiple thresholds to different pattern segments split by adaptive breakpoints.
	\item We propose a novel ELB representation and an adaptive post-processing algorithm which processes exact similarity matching in linear complexity. Furthermore, BSP policy is proposed as an optimization to accelerate the pruning process.
	\item We provide a comprehensive theoretical analysis and carry out sufficient experiments on real-world and synthetic datasets to illustrate that our algorithm outperforms prior-arts.
\end{itemize}

The rest of the paper is arranged as follows:
Section~\ref{sec:problem_overview} formally defines our problem.
Section~\ref{sec:lower_bounding} proposes ELB representation and its two implementations.
Section~\ref{sec:algorithm} introduces a pruning algorithm based on ELB and
presents an adaptive post-processing algorithm.
Section~\ref{sec:smp} introduces the BSP optimization.
Section~\ref{sec:experiment} conducts comprehensive experiments.
Section~\ref{sec:relate_work} gives a brief review of the related work.
Finally, Section~\ref{sec:conclusion} concludes the paper.

	\section{Problem Overview}
\label{sec:problem_overview}

\subsection{Problem Definition}
\newdef{definition}{Definition}
\newtheorem{thm}{Theorem}[section]
Pattern $P$ is a time series which contains $ n $ number of elements $(p_1, \cdots, p_{n})$. We denote the subsequence $ (p_i, \cdots,  p_j)$ of $P$ by $ P[i:j] $. Logically, $P$ could be divided into several consecutive \textit{segment}s which may have varied thresholds of matching deviation.
\begin{definition}\label{def:segment}
	(\textbf{Segment}): Given a pattern $P$, $ P $ is divided into $ b $ number of non-overlapping subsequences in time order, represented as $P_1, P_2,$ $\cdots, P_{b}$, in which the $k$-th subsequence $ P_k $ is defined as the $k $-th segment and associated with a specified threshold $ \varepsilon_k $.
\end{definition}

\begin{definition}\label{def:breakpoint}
	(\textbf{Breakpoint}): Let the $k$-th segment $ P_k $ ($k\in [1,b)$) be equal to $ P[i:j] $($ 1 \leqslant i \leqslant j < n$), then the index $j$ is defined as the $k$-th breakpoint to separate $ P_k $ and $ P_{k+1} $, denoted by $bp_k$.
	
\end{definition}

Due to the difficulty to set the exact breakpoint of two adjacent segments, users are allowed to specify a fuzzy region instead, which actually covers the breakpoint eventually determined by the proposed algorithm and its proximity.
\begin{definition}\label{def:Break Region}
	(\textbf{Break Region}): Given segments $ P_{k} $ and $ P_{k+1} $ ($ k\in[1,b) $), $bp_k$ is allowed to be set to any element in region $ [l_k, r_k]$, which is defined as the $k$-th \textit{break region}, denoted by $ BR_k $. Once $bp_k$ is determined, then $ [l_k, bp_k] \subset P_k$ and $ (bp_k, r_k] \subset P_{k+1}$.
\end{definition}
In this paper, we assume that  adjacent break regions are non-overlapping. Moreover, for consistency, we set $ BR_0 = [l_0,r_0] = [0,0]$.

\begin{definition}\label{def: pattern segmentation}
	(\textbf{Pattern Segmentation}): A set of breakpoints $bp_1,\cdots, bp_{b-1}$ determines one unique combination of segments  of $P$, which is called a \textit{pattern segmentation}.
\end{definition}

As shown in Figure~\ref{fig:lb_window_block}(a), for instance, pattern $ P $ is composed of three segments.
The breakpoint $ bp_1 $ between $ P_1 $ and $ P_2 $ is allowed to vary in $[4, 5] $ and $ bp_2 $ in $[11, 13] $.
Consider one segmentation that $ bp_1 = 4$ and $ bp_2 = 12$, $ P $ is divided into
$ P[1:4]$ , $P[5:12]$ and $P[13:15]$.

\begin{figure}[!htb]
	\centering
	\includegraphics[width=0.34\textwidth]{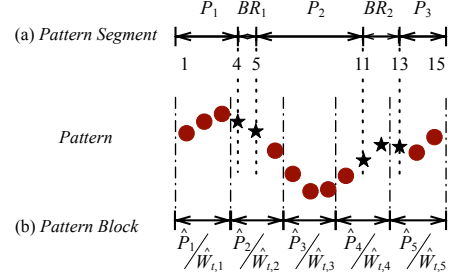}
	\caption{(a) Pattern $ P $ is composed of three segments.
		The breakpoint $ bp_1 $ between $ P_1 $ and $ P_2 $ is allowed to vary in $[4, 5] $ and $ bp_2 $ in $[11, 13] $. \quad
		(b) If we set block size $ w =3 $, thus $ P $ and $ W_t $ are divided into 5 blocks. Note that, the block division is independent of pattern segments.
	}
	\label{fig:lb_window_block}
\end{figure}

A streaming time series $S$ is an ordered sequence of elements  that arrive in time order.
We denote a sliding window on $ S $ which starts with timestamp $ t $ by  $ W_t = (s_{t,1}, s_{t,2},\cdots, \\s_{t, n})$.
According to the segment division of $ P $,
$ W_t $ is also divided into $ b $ segments $W_{t,1}, W_{t,2},\cdots, W_{t,b}$.
We define $ W_t[i:j] $ as the subsequence $(s_{t,i},s_{t,i+1},\cdots s_{t,j}) $.
For convenience, we refer to $ p_i $ and $ s_{t,i} $ as an \textit{element pair}.

There are many distance functions such as \textit{DTW} \cite{berndt_using_1994}, \textit{LCSS} \cite{vlachos_discovering_2002} , $L_p$-norm \cite{yi_fast_2000}, etc. We choose the Euclidean distance which covers a wide range of applications \cite{begum_rare_2014, faloutsos_fast_1994, mueen_online_2010} .
Referring to \cite{mueen_time_2014, mueen_logical-shapelets:_2011}, we use the normalized Euclidean distance to avoid the impact of the variety of segment length in fine-grained pattern matching.
\begin{definition}
	Given two $ n $-length sequences where $X=(x_1,x_2,\cdots,
	x_n)$ and $Y=(y_1,y_2,\cdots,y_n)$,
	we define the normalized Euclidean distance between $X$ and $Y$ as follows:
	\begin{displaymath}
	ED_{norm}(X,Y) = \sqrt{\frac{1}{n}(\sum^{n}_{i=1} |{x_i-y_i}|^2)} 
	\label{eq:ald}
	\end{displaymath}
	\label{def:ld_p}
\end{definition}
Since $ ED_{norm} $ we used is a distance function between two equal-length sequences, there are $ |W_t| = n $ and $ |W_{t,k}| = |P_k| $ for $ k \in [1,b] $.
In addition, we denote by $ ED_{norm}[i:j] $ the normalized Euclidean distance  between $ P[i:j] $ and $ W_t[i:j] $.

\noindent \textbf{Problem Statement}:
	Given a pattern $P$, users pre-define a list of break regions $ BR_1, BR_2, \cdots, BR_{b-1}  $, which divide $ P $ into $ b $ number of segments $ P_1, P_2, \cdots, P_b $, associated with specified thresholds $ \varepsilon_1, \varepsilon_2, \cdots, \varepsilon_b$. 
	For a stream $ S $, the fine-grained similarity matching problem
	is to find all sliding windows $ W_t $ on $ S $, where
	$ W_t $ has at least one segmentation so that, for any $ k \in [1,b] $, it holds that $ED_{norm}(P_k,W_{t,k}) \leqslant \varepsilon_k$ (denoted by $ W_{t,k} \prec P_k$).

	\subsection{Problem  Discussion}
\label{sec:discussion}
Before introducing our approach,
we first analyze two schemes which intend to eliminate the impact of fuzzy breakpoints by transforming a fine-grained pattern to a series of disjoint segments with fixed boundaries. Then we demonstrate that both of these two schemes cannot guarantee no false dismissals.

\noindent \textbf{Scheme 1}:  Given a fine-grained pattern $ P $, we transform it to $ b $ number of disjoint patterns $ P'_1, P'_2, \cdots, P'_b$, where $ \varepsilon'_k = \varepsilon_k (1 \leqslant k \leqslant b) $. $ bp'_k $ is defined as follows:
\begin{equation*}
\begin{split}
bp'_{k} &=
\begin{cases}
l_k &    \varepsilon_k \leqslant \varepsilon_{k+1}\\
r_{k} &otherwise\\
\end{cases}
\end{split}\label{eq:shceme-1}
\end{equation*}

\noindent \textbf{Scheme 2}:  Given a fine-grained pattern $ P $, we transform it to $ 2b-1 $ number of disjoint segments $ P'_1, P'_2, \cdots, P'_{2b-1} $:
\begin{equation*}
\begin{split}
\begin{cases}
P'_{2k-1} = P[r_{k-1}+1: l_k],  \quad \varepsilon'_{2k-1} =  \varepsilon_k &   1 \leqslant k \leqslant b \\
P'_{2k} = P[l_k+1: r_k], \quad \varepsilon'_{2k} =  \max (\varepsilon_k, \varepsilon_{k+1}) &  1 \leqslant k < b \\
\end{cases}
\end{split}\label{eq:shceme-2}
\end{equation*}

We demonstrate that these two schemes cannot guarantee no false dismissals with an example in Figure~\ref{fig:07discussion}.
The pattern $ P $ contains two segments, $ P_1 $ with $ \varepsilon_1  = 4 $ and $P_2$ with $ \varepsilon_2  = 5  $ where $ BR_1 = [4:6] $.
The top of Figure~\ref{fig:07discussion} shows  the element-pair distances between $ P $ and a candidate $ C $.
Setting $ bp_1 = 6 $ makes $ C $ be a fine-grained matching result of $ P $.
Scheme 1 simply assigns elements in $ BR_k $ to its adjacent segment with a larger threshold.
In the example of Figure~\ref{fig:07discussion}, Scheme 1 divides $ P $ into $ P'_1 = P[1:4] $ and $ P'_2 = P[5:9] $, and then falsely discards $ C $ for $ ED_{norm}(P'_2, C'_2 ) > 5 $.
Scheme 2 makes the break region $ BR_k $ be a new segment whose threshold is $ \max (\varepsilon_k, \varepsilon_{k+1}) $.
In Figure~\ref{fig:07discussion}, Scheme 2 divides $ P $ into three segments: $ P'_1 = P[1:4], P'_2 = P[5:6], P'_3 = P[7:9] $, and falsely discards $ C $ for $ ED_{norm}(P'_2, C'_2) > 5 $.
\begin{figure}[!htb]
	\centering
	\includegraphics[width=0.47\textwidth]{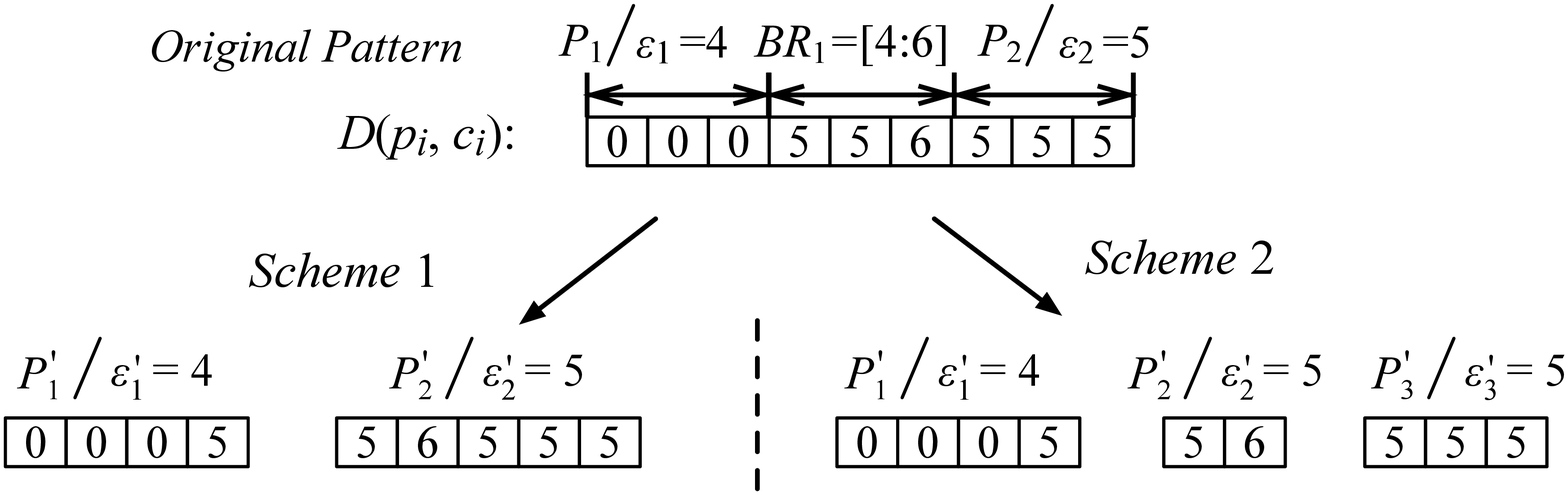}
	\caption{\small Illustration of scheme 1 and 2
	}
	\label{fig:07discussion}
\end{figure}

	\section{Equal-Length Block}\label{sec:lower_bounding}

To avoid false dismissals, a naive method is to slide the window over stream by one element and calculates the corresponding distance, which is computationally expensive. However, one interesting observation is that in most real-world applications, the majority of adjacent subsequences of time series might be similar. This heuristic gives us the opportunity to process multiple consecutive windows together. Based on this hint, we propose a novel representation for both pattern and stream, named as Equal-Length Block (ELB), and the corresponding \textit{lower bounding property} of ELB representation. 

ELB divides the pattern $P$ and the sliding window $ W_t $ into several disjoint $ w $-length blocks while the last indivisible part can be safely discarded.
The number of blocks is denoted by $N = \lfloor n/w \rfloor$.
Based on blocks, $P$ and $W_t$ can be split into $\hat{P} = \{\hat{P}_1, \cdots, \hat{P}_{N}\}$ and $ \hat{W}_t = \{\hat{W}_{t, 1}, \cdots, \hat{W}_{t,N}\} $ respectively, where $\hat{P}_j $ (or $\hat{W}_{t,j}$) is $j$-th block of $P$ (or $W_t$), that is, $\hat{P}_j =\{p_{(j-1)\cdot w+1},p_{(j-1)\cdot w+2},\cdots, p_{j\cdot w}\} $ and $ \hat{W}_{t,j} $ is similar.
Note that, the block division is independent of pattern segments. In other words, a block may overlap with two adjacent segments, and a segment contains more than one block. 
As shown in Figure~\ref{fig:lb_window_block}(b),
we set $ w = 3$ , thus $P$ and $W_t$ are divided into 5 blocks.

In ELB representation, each pattern block $ \hat{P}_j $ is represented by a pair of bounds, upper and lower bounds, which are denoted by $ \hat{P}^u_j $ and $ \hat{P}^l_j $ respectively. Each window block $ \hat{W}_{t, j} $ is represented by a feature value, denoted by $ \hat{W}_{t, j}^f$. It is worth noting that the ELB representation is only an abstract format description, which doesn't specify how to compute upper and lower bounds of $\hat{P}_j$ and the feature of window $ \hat{W}_{t, j}$. Instead, the specific implementation is called as \textit{ELB implementation}.
We can design any ELB implementation, which just needs to satisfy the following lower bounding property:

\begin{definition}\label{def:lower_bounding_property}
	(\textbf{Lower Bounding Property}):
	given $ \hat{P} $ and $ \hat{W}_t $,
	if $\exists \ i \in [0,w)$, $W_{t+i}$ is a fine-grained matching result of $P$,
	then $\forall j \in [1,N]$,
	$ \hat{P}^l_j \leqslant \hat{W}_{t, j}^f \leqslant \hat{P}_j^u$ (marked as $\hat{W}_{t, j} \prec \hat{P}_j$).
\end{definition}


Before introducing our ELB implementation, we first describe how the pattern matching algorithm works based on lower bounding property. Instead of processing sliding windows one-by-one, lower bounding property enables us to process $ w $ consecutive windows together  in the pruning phase.
Given $ N $ number of window blocks $ \{\hat{W}_{t, 1}, \cdots, \hat{W}_{t, N}\} $, if anyone in them (e.g. $\hat{W}_{t,j}$)  doesn't match its aligned pattern block ($\hat{P}_{j}$ correspondingly),  we could skip $w$ consecutive windows, $ W_t, W_{t+1},\cdots, W_{t+w-1} $, together. Otherwise, the algorithm takes these $w$ windows as candidates and uses a post-processing algorithm to examine them one by one.
The lower bounding property enables us to extend the sliding step to $ w $ while guaranteeing no false dismissals. The key challenge is how to design ELB implementation which is both computation efficient and effective to prune sliding windows. In this paper, we propose two ELB implementations, $ELB_{ele}$ and $ELB_{seq}$, which will be introduced in the next sections in turn.

\subsection{Element-based ELB Representation} 
\label{sub:lb_element}
In this section, we present the first ELB implementation, $ELB_{ele}$. The basic idea is as follows. According to our problem statement, if window $W_t $ matches $P$, there exists at least one segmentation in which,  for any segments $W_{t,k}$ and $P_k$ with boundaries $(bp_{k-1},bp_{k}]$, their normalized Euclidean distance is not larger than $ \varepsilon_k $, thus their Euclidean distance holds that:
\[
ED(W_{t,k},P_k) \leqslant \varepsilon_k \sqrt{bp_{k}-bp_{k-1}}
\]
It can be easily inferred that any element pair, $ (p_i, s_{t,i}) $, in the $ k$-th segment ($bp_{k-1} < i\leqslant bp_{k}$) satisfies that:
\begin{equation}
|s_{t,i}-p_i| \leqslant \varepsilon_k \sqrt{bp_{k}-bp_{k-1}}
\label{eq:rationale}
\end{equation}
In other words, if $s_{t,i}$ falls out of the range
\[
	[ \
	p_i - \varepsilon_k \sqrt{bp_{k}-bp_{k-1}},
	 p_i + \varepsilon_k \sqrt{bp_{k}-bp_{k-1}}
	 \ ]
\]
$W_t$ cannot match $P$ under this segmentation.

To extend this observation to an ELB implementation, we need to solve two problems: 1) how to handle the variable breakpoint and 2) how to process $w$ consecutive sliding windows together.

For the first problem, we construct a lower/upper envelope for pattern $P$, $\{[L_1,U_1], \cdots, [L_n,U_n]\}$, which guarantees that if $s_{t,i}$ falls out of $[L_{i},U_i]$, $W_t$ cannot match $P$ for any possible segmentation.
Since $bp_{k-1}\geqslant l_{k-1}$ and  $bp_{k}\leqslant r_{k}$,  we transform Equation~\ref{eq:rationale}  to
\begin{equation}
|s_{t,i}-p_i| \leqslant \varepsilon_k \sqrt{bp_{k}-bp_{k-1}} \leqslant \varepsilon_k \sqrt{r_{k}-l_{k-1}}
\label{eq:max-distance}
\end{equation}
Equation~\ref{eq:max-distance} will be suitable for all possible the $k$-th segments. The right term of Equation~\ref{eq:max-distance} represents the maximal distance between an element pair which belongs to the $ k $-th segments (denoted by $md(k)$).
If $ p_i $ can only belong to the $k$-th segment, we construct the envelope of $p_i$ as $[p_i-md(k), p_i+md(k)]$.
However, for $p_i$ within $ (l_k,r_k]$, it can belong to either the $k$-th segment or $(k+1)$-th segment. In this case, we take the larger one of $md(k)$ and $md(k+1)$ to construct the envelope. Formally, we define a notation $\theta_{ele}(i)$ as follows:
\begin{equation}
\begin{split}
\theta_{ele}(i) &=
\begin{cases}
\max(md(k),md(k+1)),&  i \in (l_k, r_k] \\
md(k),&  i \in (r_{k-1}, l_{k}]
\end{cases}\\
\end{split}\label{eq:threshold}
\end{equation}
and construct the envelope for pattern $P$ based on it:
\begin{equation}
\begin{split}
U_{i} = p_i+\theta_{ele}(i) \\
L_{i} = p_i-\theta_{ele}(i)
\end{split}
\label{eq:element_ul}
\end{equation}

\begin{figure}[!htb]
	\centering
	\includegraphics[width=0.47\textwidth]{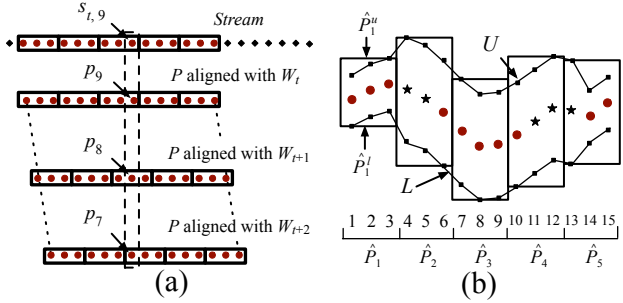}
	\caption{
		(a) The element $ s_{t,9} $ aligns with $ p_9, p_8$ and $p_7 $ at $W_{t},W_{t+1}$ and $W_{t+2}$ respectively.
		(b) $ \hat{P}_j^u $ and $ \hat{P}_j^l $ are constructed by $ U $ and $ L $.
	}
	\label{fig:03elb-ele}
\end{figure}

Now we consider the second problem, how to turn the upper/lower envelope into the bounds of pattern block and how to calculate the feature of window block, so that we could prune $ w $ number of consecutive windows together. We show the basic idea with an example in Figure~\ref{fig:03elb-ele}(a). Assume $w=3$ and $ N = 5 $.
At $W_{t}$ ($t=i\cdot w+1, i = 0, 1, 2, \cdots $),
element $s_{t,9}$ aligns with $ p_{9} $. Similarly, in $W_{t+1}$ (or $W_{t+2}$), $s_{t,9}$ aligns with $p_{8} $ (or $ p_{7} $).
Obviously, if $s_{t,9}$ falls out of all upper and lower bounds of
$ p_{9} $, $ p_{8} $ and $ p_{7} $, these 3 corresponding windows can be pruned together. Note that $s_{t,9}$ is the last element of block $\hat{W}_{t,3}$, and only in this case, all three elements of $P$ aligning with $s_{t,9}$ belong to the same pattern block, $\hat{P}_3$.
Based on this observation, we define $ \hat{P}_j^u $, $ \hat{P}_j^l $ and $ \hat{W}_j^f $ as follows:
\begin{equation}
\begin{cases}
\begin{split}
&\hat{P}_{j}^u = \max \limits_{0 \leqslant i < w} (U_{j\cdot w - i})\\
&\hat{P}_{j}^l = \min \limits_{0 \leqslant i < w} (L_{j\cdot w - i})\\
&\hat{W}_{t,j}^f = last(\hat{W}_{t,j}) = s_{t,j\cdot w}
\end{split}
\end{cases}
\label{eq:element_bound}
\end{equation}
As shown in Figure~\ref{fig:03elb-ele}(b), for each pattern block,
its upper and lower bounds are set to the maximum and minimum of its two envelope lines respectively.
We provide the proof that $ ELB_{ele} $ satisfies the lower bounding property as follows.
Note that, although we focus on the Euclidean distance, $ ELB_{ele} $ can be extended to cases of other $ L_p $-Norm distance measures easily.

\begin{proof}
	Given $P$ and $\hat{W}_t$, suppose that $W_{t,i}$ is a fine-grained matching of $P$ where $0 \leqslant i < w$. 
	Consider any window block $\hat{W}_{t,j}$ where $j\in [1,N]$.
	The last element of $\hat{W}_{t,j}$ is $s_{t,j\cdot w}$
	which aligns with $p_{j\cdot w-i}$.
	Referring to the definitions of $ \delta_{ele} $, it holds that:
	\begin{equation*}
	\begin{split}
	|last(\hat{W}_{t,j}) - p_{j\cdot w-i}| &= |s_{t,j\cdot w} - p_{j\cdot w-i}| \\
	&\leqslant  \theta_{ele}(j\cdot w-i)
	\end{split}
	\end{equation*}	
	Remove absolute value symbol:
	\begin{displaymath}
	-\theta_{ele}(j\cdot w-i) \leqslant last(\hat{W}_{t,j}) - p_{j\cdot w-i} \leqslant \theta_{ele}(j\cdot w-i)
	\end{displaymath}
	Focus on the right inequality, we deduce that:
	\begin{displaymath}
	last(\hat{W}_{t,j}) \leqslant p_{j\cdot w-i} + \theta_{ele}(j\cdot w-i)
	\end{displaymath}
	Consider the definition of $ U $ (Equation~\ref{eq:element_ul}) and $ \hat{P}_j^u $ (Equation~\ref{eq:element_bound}), there are:
	\begin{displaymath}
	last(\hat{W}_{t,j}) \leqslant U_{j\cdot w-i}
	\leqslant
	\max \limits_{0 \leqslant a < w} (U_{j\cdot w - a}) = \hat{P}_j^u
	\end{displaymath}
	Similarly we also prove $last(\hat{W}_{t,j}) \geqslant \hat{P}_j^l$. 
	
	The proof is completed.
\end{proof}

For each block $\hat{W}_{t,j} $, the  time complexities of  computing feature and comparing $\hat{W}_{t,j} \prec \hat{P}_j$ are both $ O(1) $, which makes $ELB_{ele}$ very efficient.
However, it constrains a single element pair by the tolerance of the whole segment, which makes the envelope loose.
Its pruning effectiveness is better when thresholds are relatively small or pattern deviates from normal stream far enough.


\subsection{Subsequence-based ELB Representation} 
\label{sub:lb_avg}
In this section, we introduce the second ELB implementation, subsequence-based ELB, denoted by $ELB_{seq}$. Compared to $ELB_{ele}$,  $ELB_{seq}$ has a tighter bound albeit spends a little more time on computing features of window blocks.
Different from $ELB_{ele}$ which uses the tolerance of the whole segment to constrain one element pair,  in $ELB_{seq}$, we use the same tolerance to constrain a $w$-length subsequence.

Consider two $ w $-length subsequences $ P[i':i] $ and $ W_t[i':i] $ where $ i' = i-w+1 $. We first assume all elements in $ P[i':i] $ (or $ W_t[i':i] $) belongs to only one segment, like $ P_k $ (or $ W_{t,k} $). In other words, the interval $[i':i]$ doesn't overlap with both left and right break regions. Recall that for any segmentation, if it always holds $W_{t,k}\prec P_k$, there is:
\[
ED(W_{t,k},P_k) \leqslant \varepsilon_k \sqrt{  r_{k}-l_{k-1}} = md(k)
\]
So we can obtain:
\begin{equation}
ED(P[i':i],W_t[i':i])^2 =  \sum_{j=i'}^{i}(p_j-s_{t,j})^2 \leqslant md(k)^2
\label{eq:itoi}
\end{equation}

We utilize the mean value of the subsequence to construct the envelope based on following property. Given sequence $X=(x_1,\cdots,x_m)$ and $Y=(y_1,\cdots,y_m)$ , referring to \cite{lian_similarity_2007}, it holds that:
\begin{equation}
\label{eq:convex}
m\left| \mu_x-\mu_y \right|^2 \leqslant \sum_{i=1}^m \left| x_i-y_i \right|^2
\end{equation}	
where $\mu_x$ and $\mu_y$ are the mean values of $X$ and $Y$.

We denote by $\mu_{P[i':i]}$ and $\mu_{W_t[i':i]}$ that the mean value of $P[i':i]$ and $W_t[i':i]$ respectively.  By combining Equation~\ref{eq:itoi} and Equation~\ref{eq:convex}, we have:
\begin{equation}
|\mu_{P[i':i]}-\mu_{W_t[i':i]}| \leqslant md(k)/\sqrt{w}
\end{equation}

We take $\mu_{W_t[i':i]}$ as the feature of subsequence $W_t[i':i]$, and construct the envelope of pattern $P$ as follows:
\begin{equation}
\begin{split}
U_i = \mu_{P[i':i]}+md(k)/\sqrt{w} \\
L_i = \mu_{P[i':i]}-md(k)/\sqrt{w}
\end{split}
\label{eq:onesegment}
\end{equation}

Now we consider the case that the interval $[i':i]$ overlaps with more than one segment. Suppose $[i':i]$ overlaps with the $k_l$-th segment to the $k_r$-th segment.
Due to the additivity of the squared Euclidean distance,  we deduce from Equation~\ref{eq:itoi} that:
\begin{equation}
ED(P[i':i],W_t[i':i])^2
\leqslant \sum_{k=k_l}^{k_r} md(k)^2
\label{eq:dedue-itoi}
\end{equation}
Combined with Equation~\ref{eq:convex}, we have
\begin{equation}
|\mu_{P[i':i]}-\mu_{W_t[i':i]}|
\leqslant \sqrt{\frac{1}{w}\sum_{k=k_l}^{k_r} md(k)^2}
\label{eq:seq_limit}
\end{equation}
We denoted the right term as $\theta_{seq}(i)$ and give the general case of the pattern envelope as follows:
\begin{equation}
\begin{split}
U_i = \mu_{P[i':i]}+\theta_{seq}(i) \\
L_i = \mu_{P[i':i]}-\theta_{seq}(i)
\end{split}
\label{eq:avg_ul}
\end{equation}
Note that Equation~\ref{eq:onesegment} is the special case of Equation~\ref{eq:avg_ul}.

We solve how to process $w$ number of consecutive windows together exactly as $ELB_{ele}$. The construction of upper and lower bounds are very similar to $ELB_{ele}$, while the feature of window block is adopted to the mean value.  We show the basic idea with an example in Figure~\ref{fig:03elb-seq}(a). At $W_{t} (t = i\cdot w + 1, i = 0, 1, 2, \cdots)$, the subsequence $ W_t[7 : 9] $ aligns with $ P[7:9] $. Similarly, in $W_{t+1}$ (or $W_{t+2}$), this subsequence aligns with $ P[6:8] $ (or $ P[5:7] $).
According to Equation~\ref{eq:avg_ul}, we know that if the mean value of $W_t[7 : 9] $ falls out of all upper and lower bounds of $ P[7:9], P[6:8] $ and $ P[5:7] $, these 3 corresponding windows can be pruned together. Based on this observation, we give the formal definition of $ELB_{seq}$ as follows:
\begin{equation}
\begin{cases}
\begin{split}
&\hat{P}_{j}^u = \max \limits_{0 \leqslant i < w} (U_{j\cdot w - i})\\
&\hat{P}_j^l = \min \limits_{0 \leqslant i < w} (L_{j\cdot w - i})\\
&\hat{W}_{t,j}^f = mean(\hat{W}_{t,j}) = \mu_{W_{t}[(j-1)\cdot w +1\, :\, j\cdot w]}
\end{split}
\end{cases}
\label{eq:avg_bound}
\end{equation}
For clarity, we only illustrate the bounds of $ \hat{P}_3 $ in Figure~\ref{fig:03elb-seq}(b). The lower bound $ \hat{P}^l_3 $ is set to the minimum of $ L_7, L_8 $ and $ L_9 $ and covers 3 consecutive windows $ W_{t+1}, W_8 $ and $ W_9 $.
In addition, the bounds of $ \hat{P}_1 $ is meaningless according to the definition of the envelope of $ ELB_{seq} $.
We defer the proof that $ ELB_{seq} $ satisfies the lower bounding property 
in Appendix A 
because it's exactly like the proof of $ ELB_{ele} $.
Similar to $ ELB_{ele} $, $ ELB_{seq} $ can be extended to cases of other $ L_p $-Norm distance measures easily.
\begin{figure}[!htb]
	\centering
	\includegraphics[width=0.47\textwidth]{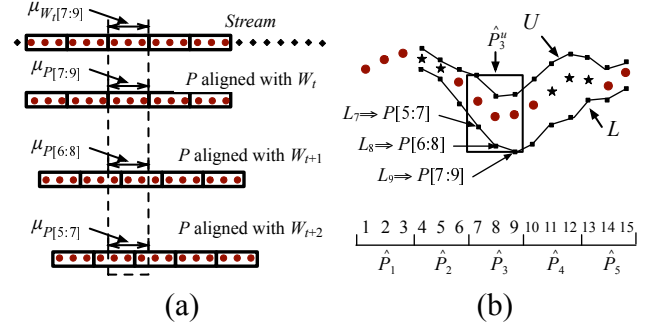}
	\caption{(a) The subsequence $ W_t[7 : 9] $ aligns with $ P[7:9],P[6:8]$ and $P[5:7] $ at $W_{t},W_{t+1}$ and $W_{t+2}$ respectively.
		(b) $ \hat{P}_j^u $ and $ \hat{P}_j^l $ are constructed by $ U $ and $ L $.
	}
	\label{fig:03elb-seq}
\end{figure}

%
%

	\section{Algorithm}
\label{sec:algorithm}
In this section, we first introduce the pruning algorithm based on ELB representation, and then describe the post-processing algorithm.
\subsection{Pruning Algorithm} 
\label{sub:matching}
The pruning algorithm processes the sliding windows based on $w$-length blocks.

\begin{figure}[!htb]
	\centering
	\includegraphics[width=0.325\textwidth]{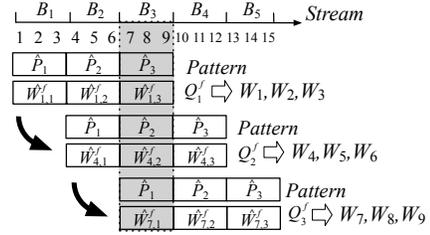}
    \caption{Global blocks will appear in multiple windows. For example, $ B_3 $ corresponds to $ \hat{W}_{1,3},\hat{W}_{4,2} $ and $\hat{W}_{7,1} $  in windows $W_1, W_4$ and $W_7$ successively.
    	$ w $ consecutive windows are processed together.  For example, if any mismatch of blocks in $ Q^f_1 $ occurs, $ W_1, W_2 $ and $ W_3 $ can be pruned together.
    }
	\label{fig:05_1_absolute_block}
\end{figure}
Globally, we split stream $S$ into disjoint blocks $B_1,B_2,\cdots$. The $i$-th block is $ B_i =S[a: a+w-1] $, where $ a = (i-1)\cdot w + 1 $. We call blocks in form of $B_i$ as \textit{global block}s since they are irrelevant to sliding windows. The feature of $B_i$ is denoted as $B^f_i$, which can be computed by either ELB$_{ele}$ or ELB$_{seq}$.
The top of Figure~\ref{fig:05_1_absolute_block} shows an example of global blocks.
Note that each global block will appear in multiple windows. For example, $ B_3 $ corresponds to $ \hat{W}_{1,3}$ in window $W_1$, $\hat{W}_{4,2} $ in window $W_4$ and $ \hat{W}_{7,1} $ in window $W_7$ successively.

We present a simple yet effective pruning algorithm based on the ELB representation, whose pseudo-code is shown in Algorithm~\ref{alg:els_framwork}.
Instead of processing sliding windows one-by-one, we take each new block as an input, and process $w$ number of sliding windows together.
Formally, we maintain an FIFO queue $Q^f$ to contain the features of latest $ N $ blocks.  When $ B^f_i $ is the oldest global block in $Q^f$, we also denote the current $Q^f$ as $Q^f_i$.
Assume the current queue is $ Q^f_{i-1} = \{B_{i-1}, B_{i}, \cdots, B_{i+N-2}\} $.
When a new block $ B_{i+N-1} $ arrives, we update $ Q^f_{i-1} $ to $ Q^f_{i} $ by popping the oldest feature $ B^f_{i-1} $ and appending $ B^f_{i+N-1} $ (Line~\ref{line:alg1-popping}).
Then we compare features in $ Q^f_{i} $ with aligned pattern blocks sequentially (Line~\ref{line:alg1-loop-start} to Line~\ref{line:alg1-loop-end}).
For example, the first feature $ Q^f_{i}[1]  $, corresponding to $ B_{i}^f $, aligns with $ \hat{P}_1 $.
If any mismatch occurs(e.g. $ Q^f_{i}[1] \nprec \hat{P}_1 $), we stop the processing of $ Q^f_{i} $ and the $w$ number of corresponding sliding windows can be pruned together.
Otherwise, we enter the post-processing phase, in which
all sliding windows covered by $ Q^f_{i} $ will be verified one-by-one (Line~\ref{line:alg1-post-processing-start} to Line~\ref{line:alg1-post-processing-end}).

\begin{algorithm}[!htb]
	\caption{Block-Based Matching Algorithm}
	\label{alg:els_framwork}
	\begin{algorithmic}[1] 
		\Require
		$ \hat{P}_1,  \hat{P}_2,  \cdots, \hat{P}_N$: a list of pattern blocks,\quad
		$ Q^f_{i-1} $: a queue containing the last $ N $ features of  window block,\quad
		$B_{i+N-1}$: a new  window block.
		\Ensure Fine-grained matching results
		\State obtain $ Q^f_i $: pop the oldest $ B^f_{i-1} $ and append $ B^f_{i+N-1} $. \label{line:alg1-popping}
		\For{$j = 1 \to N$}\label{line:alg1-loop-start}
			\If { $ Q^f_{i}[j] \nprec \hat{P}_j$ }
				\State \Return
			\EndIf
		\EndFor\label{line:alg1-loop-end}
		\State $ t \gets (i-1)\cdot w+1 $\label{line:alg1-post-processing-start}
		\State conduct post processing for $W_{t}, W_{t+1}, \cdots W_{t+w-1}$. \label{line:alg1-post-processing-end}
	\end{algorithmic}
\end{algorithm}
For instance in Figure~\ref{fig:05_1_absolute_block}, $ Q^f_1 $ contains features of $ B_1, B_2 $ and $ B_3 $ and covers sliding windows $W_1$, $W_2$ and $W_3$ .
If any feature in $ Q^f_1 $ doesn't match its aligned pattern block,
we could prune these three windows together.
Otherwise, we conduct post processing for $W_1$, $W_2$ and $W_3$  and report windows which are fine-grained matching results of $ P $.
Next,
as $ B_4 $ arrives, we update $ Q^f_1 $ to $ Q^f_2 $ by popping $ B^f_1 $ and appending $ B^f_4 $, and then process $ Q^f_2 $  corresponding to windows $ W_4, W_5 $ and $ W_6 $.

Algorithm~\ref{alg:els_framwork} brings us two advantages.
First, since the block size is equal to the sliding step, we only need calculate once for feature of each block , which means the amortized complexity of feature calculation is $ O(1) $.
Second, although we process windows with size of sliding step larger than 1, which sounds like batch solutions in previous works, our algorithm doesn't sacrifice response time.
In contrast to previous works(e.g. \cite{bulut_unified_2005, lian_multiscale_2009}) which have to wait until data of $ w $ sliding windows are collected entirely,
we could prune a sliding window before all of its elements arrive.
As shown in Figure~\ref{fig:05_1_absolute_block}, $ Q^f_1 $, computed by $ S[1:9] $, can be used to prune $ W_3 $($ =S[3:11] $), even though $ s_{10} $ and $ s_{11} $ do not  arrive yet.

\subsection{Post-processing Algorithm}
\label{sec:exact-calc}
In traditional pattern matching works, the post-processing phase is trivial which computes Euclidean distance between the pattern and an unpruned candidate with complexity of $ O(n)$.
However, in our work, the multi-segment setting with variable breakpoints makes it a difficult task. There exists $O(r^b)$ number of possible segmentations where $r$ is the average length of break regions.
Checking all segmentations is prohibitively time-consuming.
We begin with a baseline solution (Section~\ref{subsub:strawman}) whose time complexity is $ O(nr^2) $ in the worst case, and then introduce an adaptive post-processing algorithm with linear complexity of $ O(n) $(Section~\ref{subsub:post}).

\subsubsection{Baseline}
\label{subsub:strawman}
We introduce a baseline solution which determines breakpoints sequentially for each break region instead of exhaustively examining all segmentations for each sliding window.
As the post-processing phase is independent of the timestamp of the sliding window,
for simplicity, we denote the unpruned candidate by $C = \{c_1, c_2, \cdots, c_n\}$ instead of $ W_t $.

We begin by processing the first segment, from which we try to find all potential $ bp_1 $ in $ BR_1 $ which lead to $C_1\prec P_1$. Formally, we denote the potential set of $ bp_1 $ as follows:
\begin{equation}
PS_1=\{j \in [l_1,r_1] | ED_{norm}(P[1:j] : C[1:j]) \leqslant \varepsilon_1\}
\end{equation}
The potential sets of other breakpoints are similar.
If $ PS_1 $ is not empty, we calculate $ PS_2 $ from $ BR_2 $ based on it.
For each $ i $ in $PS_1$ and $ j $ in $ BR_{2} $, if $ C[i+1:j] $ is an $ \varepsilon_2 $-match of $ P[i+1:j] $, we add $ j $ into $ PS_2 $.
After that, if $ PS_2 $ is empty, $ C $ can be abandoned safely. Otherwise, we continue to calculate $ PS_3, PS_4, \cdots, PS_{b-1}$.
If all of $PS_i$ ($1 \leqslant i<b$) is not empty, we examine whether there exists at least one potential breakpoint $ i $ in $ PS_{b-1} $, so that
$C_{b}\prec P_{b}$. If it is the case, $ C $ is reported as a result.

The correctness of the baseline solution is obvious. Its time complexity is calculated as follows.
Processing step for each segment involves up to $ O(r^2) $ distance calculations where each distance calculation is in $ O(n/b) $.
For $ b $ segments, the total time complexity is $ O(nr^2) $ in the worst case.

\subsubsection{Adaptive  Post-processing Algorithm}
\label{subsub:post}
In this section, we present an adaptive post-processing algorithm which allows us to determine only one \textit{optimal breakpoint} instead of a set of potential breakpoints, which greatly decreases the amount of distance calculations.
Our algorithm is able to examine a candidate with $O(n)$ while guaranteeing no false dismissals.

We first give the definition of the optimal breakpoint.
Suppose we have determined $ bp_1, bp_2, \cdots bp_{k-1} $,
and focus on the $ k $-th optimal breakpoint.
To facilitate the description, we define two delta-functions, $\delta(i,\varepsilon)$ and $\Delta(l,r,\varepsilon)$ as follows, where $1\leqslant l\leqslant r\leqslant n$:
\begin{equation}
\begin{split}
\delta(i,\varepsilon)&=|c_i-p_i|^2-\varepsilon^2\\
\Delta(l,r,\varepsilon)&=\sum_{i=l}^{r} \delta(i,\varepsilon)
\end{split}
\label{eq:delta}
\end{equation}
The first property of $\Delta(l,r,\varepsilon)$ is equivalence:
\newtheorem{lemma}{lemma}[section]
\begin{lemma}
	\label{LEMMA:DELTA}
	$\Delta(l,r,\varepsilon) \leqslant 0$ is equivalent to
	\begin{displaymath}
	ED_{norm}(C[l:r], P[l:r]) \leqslant \varepsilon
	\end{displaymath}
\end{lemma}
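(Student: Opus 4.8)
\textbf{Proof proposal for Lemma \ref{LEMMA:DELTA}.}

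The plan is to unwind both sides to a common algebraic inequality by pure manipulation. Set $m = r - l + 1$, the length of the subsequence $C[l:r]$ (equivalently $P[l:r]$). First I would start from the definition of $\Delta$ and $\delta$ in Equation~\ref{eq:delta} and expand:
\begin{equation*}
\Delta(l,r,\varepsilon) = \sum_{i=l}^{r}\delta(i,\varepsilon) = \sum_{i=l}^{r}\bigl(|c_i-p_i|^2-\varepsilon^2\bigr) = \sum_{i=l}^{r}|c_i-p_i|^2 - m\varepsilon^2 .
\end{equation*}
Hence $\Delta(l,r,\varepsilon)\leqslant 0$ is equivalent to $\sum_{i=l}^{r}|c_i-p_i|^2 \leqslant m\varepsilon^2$, i.e.\ to $\frac{1}{m}\sum_{i=l}^{r}|c_i-p_i|^2 \leqslant \varepsilon^2$.

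Next I would invoke Definition~\ref{def:ld_p} (the definition of $ED_{norm}$) applied to the two $m$-length sequences $C[l:r]$ and $P[l:r]$, which gives $ED_{norm}(C[l:r],P[l:r]) = \sqrt{\frac{1}{m}\sum_{i=l}^{r}|c_i-p_i|^2}$. Since both sides of the inequality $\frac{1}{m}\sum_{i=l}^{r}|c_i-p_i|^2\leqslant\varepsilon^2$ are nonnegative (the left side is a sum of squares divided by a positive integer; the right side is a square, and $\varepsilon\geqslant 0$ as a threshold), taking square roots is a monotone and reversible operation, so this inequality is equivalent to $ED_{norm}(C[l:r],P[l:r])\leqslant\varepsilon$. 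Chaining the two equivalences yields the claim.

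There is essentially no hard part here: the statement is a bookkeeping identity, and the only point deserving a word of care is the justification that squaring/square-rooting preserves the direction of the inequality, which holds because both quantities are nonnegative and $\varepsilon\geqslant 0$. I would also note explicitly that $1\leqslant l\leqslant r\leqslant n$ guarantees $m\geqslant 1$, so the division by $m$ is legitimate and the normalized distance is well defined. The value of the lemma is not its difficulty but its role: it converts the threshold condition $W_{t,k}\prec P_k$ (and, more generally, any $ED_{norm}$ constraint over a candidate segment) into an additive quantity $\Delta$, which is what will later let the post-processing algorithm slide breakpoints within a break region by incremental updates rather than recomputing distances from scratch.
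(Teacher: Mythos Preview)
Your proposal is correct and follows essentially the same route as the paper: expand $\Delta$ via the definition of $\delta$, rearrange into a sum-of-squares inequality, and take square roots using Definition~\ref{def:ld_p}. You are in fact a bit more careful than the paper in two places: you make the count $m=r-l+1$ explicit (the paper writes $(r-l)$, an apparent off-by-one slip), and you justify the monotonicity of the square-root step via nonnegativity.
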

\begin{proof}
	According to the definition of $ \Delta(l,r,\varepsilon) $, there is:
	\begin{equation*}
	\begin{split}
	\Delta(l,r,\varepsilon) &=\sum_{i=l}^{r}\delta(i,\varepsilon)
	=\sum_{i=l}^{r}(|c_i-p_i|^2-\varepsilon^2) \\
	&=\sum_{i=l}^{r}|c_i-p_i|^2-(r-l)\varepsilon^2 \leqslant 0\\
	\end{split}
	\end{equation*}
	Transpose $ (r-l)\varepsilon^2 $:
	\begin{displaymath}
	\sum_{i=l}^{r}|c_i-p_i|^2 \leqslant (r-l)\varepsilon^2
	\end{displaymath}
	Transpose $ (r-l) $ and square both sides:
	\begin{displaymath}
	\sqrt{\frac{1}{r-l}\sum_{i=l}^{r}|c_i-p_i|^2}
	= ED_{norm}(C[l:r],P[l:r])
	\leqslant \varepsilon
	\end{displaymath}
	
	The proof is completed.
\end{proof}
The second property of $\Delta(l,r,\varepsilon)$ is additivity:
\begin{displaymath}
\Delta(l,r,\varepsilon)=\Delta(l,r-1,\varepsilon)+\delta(r,\varepsilon)=
\delta(l,\varepsilon) + \Delta(l+1,r,\varepsilon)
\end{displaymath}

Now we can define the $k$-th optimal breakpoint as follows:
\begin{definition}(\textbf{Optimal breakpoint})
	Given $ PS_k $, the $k$-th breakpoint is named as an optimal breakpoint, denoted by $bp^{opt}_k$, if it holds that:
	\begin{displaymath}
	bp^{opt}_k=\mathop{\arg\min}_{j\in PS_{k}} \Delta(j+1,r_k+1,\varepsilon_{k+1})
	\end{displaymath}
	\label{def:opt-bp}
\end{definition}

We prove correctness of our algorithm with Lemma~\ref{LEMMA:POSTPROCESS}.
\begin{lemma}
	\label{LEMMA:POSTPROCESS}
	Assume $bp_{k+1}$ is any breakpoint in $[l_{k+1},r_{k+1}]$. If $\Delta(bp^{opt}_{k}+1,bp_{k+1},\varepsilon_{k+1})>0$, then for any other breakpoint $j\in PS_{k}$, it holds that $\Delta(j+1,bp_{k+1},\varepsilon_{k+1})>0$.
\end{lemma}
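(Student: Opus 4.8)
The plan is to exploit the additivity of $\Delta$ together with the defining property of the optimal breakpoint. Fix an arbitrary $bp_{k+1}\in[l_{k+1},r_{k+1}]$ and an arbitrary $j\in PS_k$. By additivity applied to the interval $[j+1,r_k+1]$ split at $bp_{k+1}$ (note $bp_{k+1}\geqslant l_{k+1}>r_k$, so $bp_{k+1}$ lies strictly to the right of $r_k+1$ only if $bp_{k+1}>r_k+1$; more carefully, since adjacent break regions are non-overlapping we have $l_{k+1}\geqslant r_k+1$, hence $bp_{k+1}\geqslant r_k+1$), I can write
\begin{displaymath}
\Delta(j+1,bp_{k+1},\varepsilon_{k+1}) = \Delta(j+1,r_k+1,\varepsilon_{k+1}) + \Delta(r_k+2,bp_{k+1},\varepsilon_{k+1}),
\end{displaymath}
with the convention that the second term is $0$ when $bp_{k+1}=r_k+1$. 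The same decomposition holds with $j$ replaced by $bp^{opt}_k$, and crucially the ``tail'' term $\Delta(r_k+2,bp_{k+1},\varepsilon_{k+1})$ does not depend on the choice of breakpoint in $PS_k$.

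The key step is then a direct comparison: by Definition~\ref{def:opt-bp}, $bp^{opt}_k$ minimizes $\Delta(\cdot+1,r_k+1,\varepsilon_{k+1})$ over $PS_k$, so
\begin{displaymath}
\Delta(j+1,r_k+1,\varepsilon_{k+1}) \geqslant \Delta(bp^{opt}_k+1,r_k+1,\varepsilon_{k+1}).
\end{displaymath}
Adding the common tail term $\Delta(r_k+2,bp_{k+1},\varepsilon_{k+1})$ to both sides preserves the inequality, which yields
\begin{displaymath}
\Delta(j+1,bp_{k+1},\varepsilon_{k+1}) \geqslant \Delta(bp^{opt}_k+1,bp_{k+1},\varepsilon_{k+1}) > 0,
\end{displaymath}
where the last inequality is the hypothesis. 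This is exactly the claim.

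I expect the main obstacle to be purely bookkeeping rather than conceptual: I must be careful about the index ranges so that the additive split is legitimate — specifically that $j+1\leqslant r_k+1\leqslant bp_{k+1}$, which requires knowing $j\leqslant r_k$ (true since $j\in PS_k\subseteq BR_k=[l_k,r_k]$) and $r_k+1\leqslant bp_{k+1}$ (true since break regions are non-overlapping, so $l_{k+1}\geqslant r_k+1$). I also need to handle the degenerate case $bp_{k+1}=r_k+1$ cleanly, where the tail term vanishes and the statement reduces immediately to the defining inequality of $bp^{opt}_k$ combined with the hypothesis. Beyond these edge cases, the argument is a one-line application of additivity plus the $\arg\min$ property, so no real difficulty is anticipated; the only thing to double-check is that the $\varepsilon$ used throughout is consistently $\varepsilon_{k+1}$, matching both the hypothesis and Definition~\ref{def:opt-bp}.
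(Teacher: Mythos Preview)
Your proposal is correct and follows essentially the same approach as the paper's own proof: both decompose $\Delta(\cdot+1,bp_{k+1},\varepsilon_{k+1})$ additively at the pivot $r_k+1$, observe that the tail term $\Delta(r_k+2,bp_{k+1},\varepsilon_{k+1})$ is common, and then invoke the $\arg\min$ property of $bp^{opt}_k$ from Definition~\ref{def:opt-bp}. Your version is in fact slightly more careful than the paper's, since you explicitly verify the index constraints $j\leqslant r_k$ and $r_k+1\leqslant bp_{k+1}$ needed for the split and address the degenerate case $bp_{k+1}=r_k+1$.
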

\begin{proof}
	Based on the definition of $ \Delta(l,r,\varepsilon) $, for $ bp_k^{opt} $:
	\begin{equation*}
	\begin{split}
	\Delta(bp^{opt}_{k}+1,bp_{k+1},\varepsilon_{k+1})
	&=\Delta(bp^{opt}_{k}+1,r_{k}+1,\varepsilon_{k+1})\\
	&+\Delta(r_{k}+2,bp_{k+1},\varepsilon_{k+1})
	\end{split}
	\end{equation*}
	For $ j $:
	\begin{equation*}
	\begin{split}
	\Delta(j+1,bp_{k+1},\varepsilon_{k+1})
	&=\Delta(j+1,r_{k}+1,\varepsilon_{k+1})\\
	&+\Delta(r_{k}+2,bp_{k+1},\varepsilon_{k+1})
	\end{split}
	\end{equation*}
	According to Definition~\ref{def:opt-bp}, there is:
	\begin{displaymath}
	\Delta(j+1,r_{k}+1,\varepsilon_{k+1})
	\geqslant
	\Delta(bp^{opt}_{k}+1,r_{k}+1,\varepsilon_{k+1})
	\end{displaymath}
	Therefore:
	\begin{displaymath}
	\Delta(j+1,bp_{k+1},\varepsilon_{k+1})
	\geqslant
	\Delta(bp^{opt}_{k}+1,bp_{k+1},\varepsilon_{k+1})
	>0
	\end{displaymath}	
	
	The proof is completed.
\end{proof}

Lemma~\ref{LEMMA:POSTPROCESS} demonstrates that we just need to maintain an optimal breakpoint instead of a set of potential values.
Now we introduce how to find $PS_k$ and $ bp^{opt}_k $ in linear time. Assume we have determined $bp^{opt}_{k-1}$\footnote{In the case of $k=1$,we just set $bp^{opt}_{0}=0$.}, thus the starting point of the $ k $-th segment is $L=bp^{opt}_{k-1}+1$. To compute $PS_{k}$, we first initialize $\Delta(L,L,\varepsilon_k) = \delta(L,\varepsilon_k)$ and then extend it rightward to obtain $\Delta(L,L+1,\varepsilon_k)$, $\Delta(L,L+2,\varepsilon_k), \cdots$ until $\Delta(L,r_{k},\varepsilon_k)$. For each $i\in [l_{k},r_{k}]$, if there is $\Delta(L,i,\varepsilon_k)\leqslant 0$, we add $i$ into $PS_k$.

\begin{figure}[!htb]
	\centering
	\includegraphics[width=0.475\textwidth]{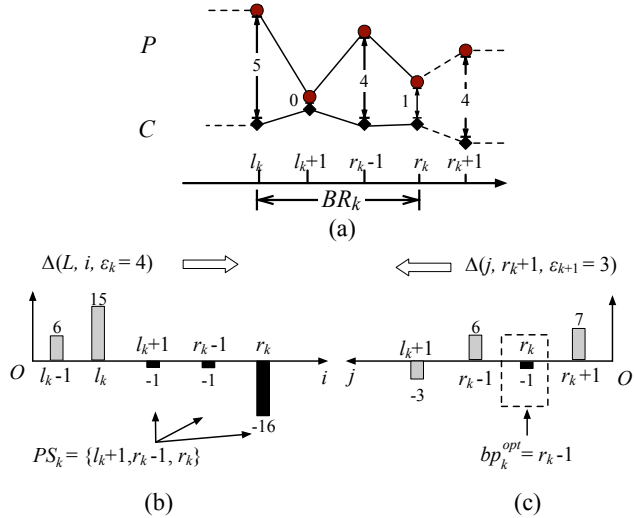}
	\caption{
		(a) The distances between 4 aligned element pairs within $ BR_k $ are $ 5, 0, 4$ and $ 1$ respectively.
		(b) Compute $\Delta $-value from left to right and then $\{l_k+1, r_k-1, r_k\} $ are added into $ PS_k $.
		(c) Compute $\Delta $-value from right to left. $ r_k - 1 $ is determined as  $bp_{k}^{opt}$ since it has the minimal $\Delta$-value among all breakpoints in $ PS_k $.
	}
	\label{fig:06-post-processing}
\end{figure}
For example,
Figure~\ref{fig:06-post-processing}(a) illustrates distances of element pairs within $ BR_k $
and Figure~\ref{fig:06-post-processing}(b) depicts the process of finding $ PS_k $.
Given $ \varepsilon_k = 4 $, we accumulate the delta-function from $\Delta(L,l_{k},\varepsilon_k)  = 15$ to $\Delta(L,r_{k},\varepsilon_k) = -16$ . Finally, we obtain $ PS_k = \{l_k+1, r_k-1, r_k\} $.

If $PS_k$ is not empty, we determine $bp_{k}^{opt}$ from it.
We first initialize $\Delta(r_k+1,r_k+1,\varepsilon_{k+1})$ as $\delta(r_k+1,\varepsilon_{k+1})$, which corresponds to the rightmost potential breakpoint $ r_k $. Then we extend it leftward and obtain $\Delta(r_k,r_k+1, \varepsilon_{k+1})$, $\Delta(r_k-1,r_k+1, \varepsilon_{k+1}), \cdots$, until we reach $\Delta(l_k+1,r_k+1,\varepsilon_{k+1})$, corresponding to the leftmost potential breakpoint $ l_k $. In this process, we keep the breakpoint which belongs to $ PS_k $ and has the minimal $\Delta$-value until the process terminates, and the final one is $bp^{opt}_k$.

Figure~\ref{fig:06-post-processing}(c) illustrates the process of determining $bp^{opt}_k$.
We begin with  $\Delta(r_k+1,r_k+1,\varepsilon_{k+1}) = 7$  and extend it leftward until  $\Delta(l_k+1,r_k+1,\varepsilon_{k+1}) = -3$.
$ r_k - 1 $ is determined as $ bp^{opt}_k$ because it has the minimal $\Delta$-value, $ \Delta(r_k,r_k+1, \varepsilon_{k+1}) = -1$, among all breakpoints in $ PS_k $.

A candidate $C$ is reported as a result only if we successfully find all of $b-1$ number of optimal breakpoints, and in the $b$-th segment, $C_b\prec P_b$ where $C_b=(c_{bp^{opt}_{b-1}+1},\cdots, c_n)$ and $P_b=(p_{bp^{opt}_{b-1}+1},\cdots, p_n)$. Otherwise, $C$ cannot match $P$ for any segmentations.
Even if in the worst case, our post-processing algorithm determines a breakpoint by scanning its break region twice, where the time complexity is $ O(n) $.


	\section{Block-Skipping Pruning}\label{sec:smp}
In this section, we propose an optimization, named Block-Skipping Pruning(BSP), which utilizes \textit{block-skipping set} and \textit{lookup table} to speed up the pruning phase.

\noindent
\textbf{Block-Skipping Set}
As illustrated in Algorithm~\ref{alg:els_framwork}, each window block is appended into the queue $ Q^f $ from one side,
aligns with each pattern block of $P$ successively and is finally popped out from the other side.
Since pattern blocks are static,
as soon as a new window block $ B_{i} $ is appended into $ Q^f $,  we can determine whether it matches pattern blocks $ \hat{P}_N, \hat{P}_{N-1}, \cdots, \hat{P}_1$. Once $ B_{i} $ doesn't match $ \hat{P}_j $, we can safely prune corresponding $ Q^f $ in which $ B_{i} $ aligns with $ \hat{P}_j $, even though the entire $Q^f$ doesn't arrive yet. In other words, the new $ B_{i} $ can prune multiple $Q^f$, as well as the corresponding sliding windows.

We use block-skipping set to represent the indexes of $Q^f$ which can be pruned by block $B_i$. Formally,
\begin{equation}
skip\_set(B_i) = \{i-j+1| B_{i} \nprec \hat{P}_j, j\in [1, N]\}
\label{eq:bs_set}
\end{equation}
Consider the example in Figure~\ref{fig:skip_set}, the block $B_3$ with feature of 4 matches $ \hat{P}_3 $ but doesn't match $ \hat{P}_1 $ and $ \hat{P}_2 $. Therefore $ skip\_set(B_3) = \{2, 3\} $, which indicates that $ Q^f_2 $ and $ Q^f_3 $ can be skipped directly.
\begin{figure}[!htb]
	\centering
	\includegraphics[width=0.375\textwidth]{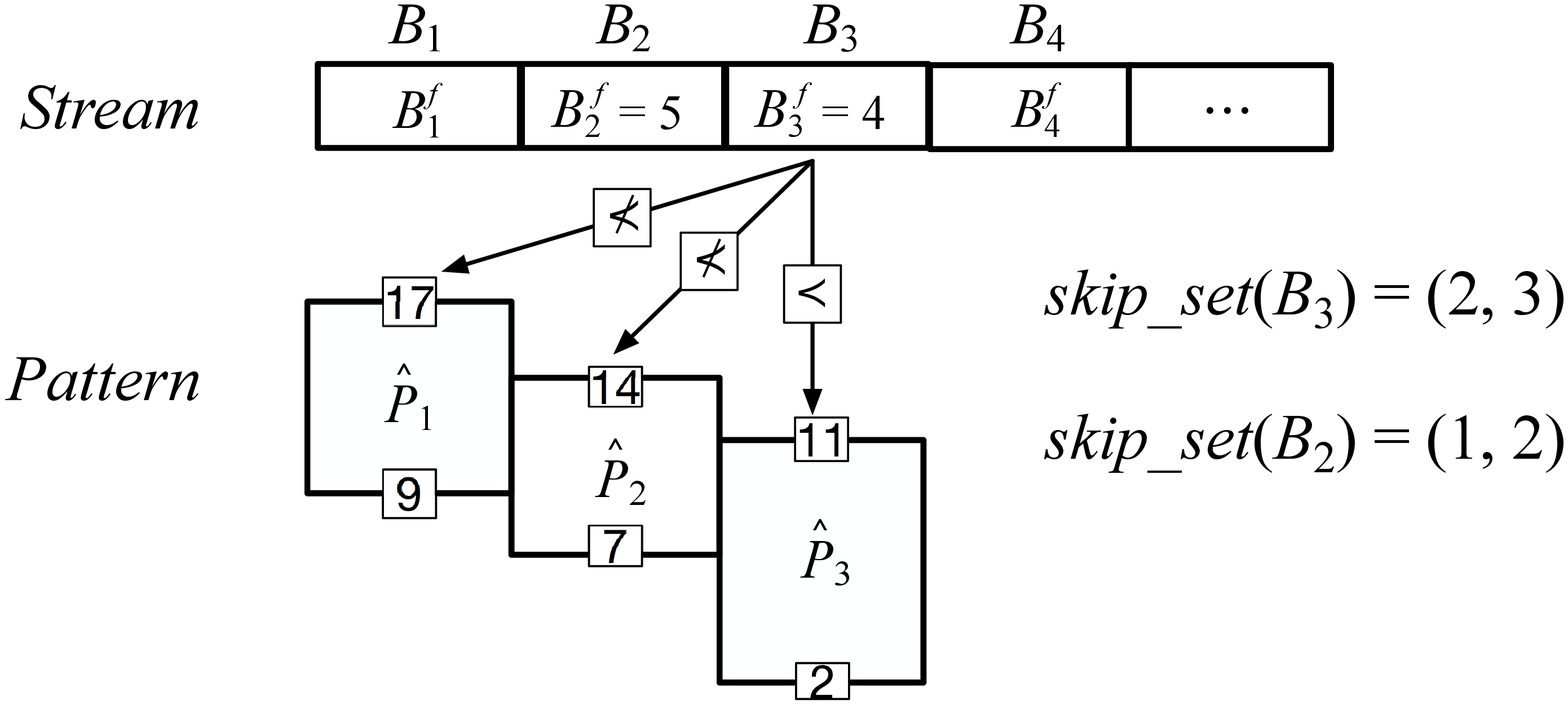}
	\caption{
		The definition of block-skipping set. $ B^f_3  = 4 $ doesn't match $ \hat{P}_1 $ and $ \hat{P}_2 $, therefore $ skip\_set(B_3) = \{2, 3\} $, which means that $ Q^f_2 $ and $ Q^f_3 $ can be skipped directly.
	}
	\label{fig:skip_set}
\end{figure}

\noindent
\textbf{Lookup Table}
According to Equation~\ref{eq:bs_set}, we may have to exhaustively compare $ B_i $ with all pattern blocks to obtain $ skip\_set(B_i) $.
Nevertheless, since all of upper and lower bounds of pattern blocks can be calculated in advance, we construct a \textit{lookup table} to speedup the computation of block-skipping sets.

Specifically, we divide the feature domain, $ (-\infty, \infty) $, into $A$ number of disjoint regions, $\{VR_1,\cdots,VR_A\}$, based on upper and lower bounds of pattern blocks (i.e., $ \hat{P}^u_j $ and $ \hat{P}^l_j $). We sort these bounds in ascending order and denoted them by $\{\beta_1, \beta_2,\cdots,\beta_{A-1}\}$.\footnote{To ease the description, we add two dummy points, $\beta_0=-\infty$ and $ \beta_A=\infty$.}
For a block feature, $ f $, we use $region(f)$ to denote the region $f$ falls within, that is,
$region(f)=VR_i$, if $\beta_{i-1}<f<\beta_{i}$. As a special case, if $f$ exactly equals to one boundary value, say $\beta_i$, $ region(\beta_i) $ will be
the union of $ VR_i $ and $ VR_{i+1}$ ($1\leq i\leq A-1$).
As shown in Figure~\ref{fig:card_red_ipd}(a), the feature domain is divided into 7 value regions along with bounds of pattern blocks $\{2, 7, 9, 11, 14, 17\}$. For $ B_2^f = 5$, there is $ region(B^f_2)=VR_2 $.

Lookup table records the relationship between value regions and pattern blocks. Each entry is a key-value pair, in which the key is a value region, and the value is the set of pattern block indexes. Formally,
\begin{equation}
table(VR_i) = \{j|VR_i \nprec \hat{P}_j, j \in [1, N]\}
\end{equation}
Figure~\ref{fig:card_red_ipd}(b) shows the lookup table of Figure~\ref{fig:card_red_ipd}(a). Since $ VR_2 $ overlaps with $ \hat{P}_3 $, $ table(VR_2) =\{1,2\}$ . Similarly, $ table(VR_7) $ is $\{1, 2, 3\}$ since $ VR_7 $ does not overlap with any pattern block.
Based on the lookup table, we compute $ skip\_set(B_i) $ as follows:
\begin{equation}
skip\_set(B_i) = \{i-j+1| j\in table(region(B^f_i)) \}
\label{eq:bs_set_r}
\end{equation}
\begin{figure}[!htb]
	\centering
	\includegraphics[width=0.475\textwidth]{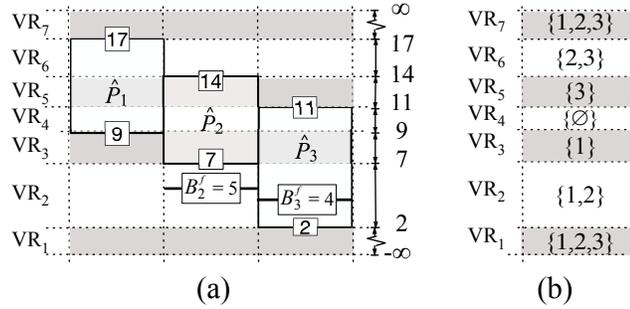}
	\caption{
		(a) The feature domain is divided into 7 value regions along with $\{2, 7, 9, 11, 14, 17\}$. $ B^f_2 = 5$ and $ B^f_3 = 4$ are mapped to $ VR_2 $.
		(b) A lookup table of pattern blocks in (a). $ table(VR_2) =\{1,2\}$ since $ VR_2 $ overlaps with $ \hat{P}^f_3 $ .
	}
	\label{fig:card_red_ipd}
\end{figure}

\begin{algorithm}[!htb]
	\caption{Block-Skipping Pruning Optimization}
	\label{alg:bsp_algorithm}
	\begin{algorithmic}[1] 
		\Require
		$ Q^f_{i} $: a queue containing the last $ N $ features of  window blocks, \quad
		$ skip\_union $: a union containing indexes of $ Q^f $ which can be skipped in the future.
		\Ensure Fine-grained matching results
		\State update $ Q^f_i $ to $ Q^f_{i'} $: $ i'$ is the nearest index where $ i' \geqslant i $ and $ i' \notin skip\_union $.\label{line:bsp-update-qf}
		\For{$j = N \to 1$}\label{line:get-bsp-loop-start}
		\State $ \mathcal{D} \gets skip\_set(Q^f_{i'}[j]) $
		\State merge $ \mathcal{D} $ into $ skip\_union $.
		\If { $ i' \in \mathcal{D}$ }
		\State \Return
		\EndIf
		\EndFor\label{line:get-bsp-loop-end}
		\State $ t \gets (i'-1)\cdot w+1 $
		\State conduct post processing for $W_{t}, W_{t+1}, \cdots W_{t+w-1}$. \label{line:bsp-post-processing-end}
	\end{algorithmic}
\end{algorithm}

\noindent
\textbf{BSP Optimization}

Based on block-skipping set, we propose BSP optimization algorithm to accelerate the pruning phase, instead of re-comparing features of window block from scratch. The extended algorithm is shown in Algorithm~\ref{alg:bsp_algorithm}.
We maintain a global skipping set, $ skip\_union $, to record indexes of $ Q^f $ which can be skipped in the future. The global skipping set is the union of those of all recent window blocks.
Assume we arrive $ Q^f_i $, we first compute $skip\_set(B_{i+N-1})$ and merge it into $skip\_union$. Then we check whether $i\in skip\_union$, if it is the case, we stop the processing of $ Q^f_i $, and come to the nearest $Q^f_{i'}$ which satisfies $i'\notin skip\_union$. Otherwise, we compute $skip\_set(B_{i+N-2})$. This process terminates either $i\in skip\_union$ or the oldest block $B_{i}$ has been checked. If in the latter and we still cannot prune $ Q^f_i $, we enter the post-processing phase to check whether the sliding windows corresponding to $Q^f_i$ will match $P$.

For example in Figure~\ref{fig:skip_set}, $ skip\_union$ is initially empty.
Given $ Q^f_1 $, we compute $ skip\_set(B_{3}) = \{2, 3\} $ and merge it into $ skip\_union$.
As $ skip\_set(B_{3})$ cannot skip $ Q^f_1 $, we compute $ skip\_set(B_{2}) = \{1, 2\} $ continuously.
$ skip\_set(B_{2}) $ contains $ 1 $ so we skip $ Q^f_1 $.
Since current $ skip\_union = \{1, 2, 3\}$, we skip $ Q^f_1, Q^f_2, Q^f_3 $ together and start to process $ Q^f_4 $.

	\section{Experimental Evaluation}
\label{sec:experiment}
In this section we first describe datasets and experimental settings in Section~\ref{sub:experimental_Settings} and then
present the results of performance evaluation comparing Sequential Scanning(SS), MSM and our two approaches based on $ ELB_{ele} $(ELB-ELE) and $ ELB_{seq} $ (ELB-SEQ) respectively. 
Our goal is to:
\begin{itemize}
	\item Demonstrate the efficiency of our approach on different break region sizes and different distance thresholds.
	\item Demonstrate the robustness of our approach on different pattern occurrence probabilities. 
	\item Investigate the impact of block size and BSP policy.
\end{itemize}

\subsection{Experimental Setup}
\label{sub:experimental_Settings}
The experiments are conducted on both synthetic and real-world
 datasets.

\noindent \textbf{Datasets}.
Real-world datasets are collected from a wind turbine manufacturer, where each wind turbine has hundreds of sensors generating streaming time series with sampling rate from 20 ms to 7s.
Our experimental datasets are from 3 turbines.
In each turbine we collect data of 5 sensors including wind speed, wind deviation, wind direction, generator speed and converter power. 
We replay the data as streams with total lengths of $10^8$.
For each stream, a pattern is given together with the break regions and varied multiple thresholds by domain experts.

Synthetic datasets are based on UCR Archive \cite{keogh_welcome_nodate}.
UCR Archive is a popular time series repository, which includes a set of datasets widely used in time series mining researches ~\cite{begum_rare_2014, keogh_exact_2002, lian_similarity_2007}. We generate the synthetic datasets based on UCR Archive. To simulate patterns with diverse lengths, we select four datasets, Strawberry, Meat, NonInvasiveFatalECG\_Thorax1 (ECG for short) and MALLAT whose time series lengths are 235, 448, 750 and 1024, and numbers of pattern segments are 5, 6, 8 and 7 respectively.
For each selected UCR dataset, we always choose the first time series of class 1 as fine-grained pattern.
To obtain patterns with fuzzy regions, 
we first divide a pattern with a list of fixed breakpoints according to its shape and trend, and then extend these breakpoints symmetrically to both sides with a same radius.
For example, the pattern of Strawberry dataset with length of $ 235 $ is divided into five segments by four breakpoints $ \{43, 72, 140, 159\} $.
When the radius is 2, there is $BR_1 = [41, 45]$ and the others are similar.

In terms of threshold, 
we define \textit{threshold\_ratio} as the ratio of its normalized Euclidean threshold to the value range of this segment. 
Given a unified \textit{threshold\_ratio}, we can calculate thresholds of all pattern segments.
In practice, we observe that \textit{threshold\_ratio} being larger than $ 30\% $ indicates that the average deviation from a stream element to its aligned pattern element is more than $ 30\% $ of its value range.
In this case, the candidate may be quite different from given pattern where similarity matching becomes meaningless. 
Therefore, we vary \textit{threshold\_ratio} from 5\% to 30\% in the experiments (Section~\ref{sub:performance_of_different_thres_factor}).

As for streaming data, we first generate a random walk time series $S$ with length 10,000,000 for each selected UCR dataset.
Element $ s_i $ of $ S $ is $s_i = R + \sum_{j=1}^{i} (\mu_j - 0.5)$, where $\mu_j$ is a uniform random number in $ [0, 1]$.
As value ranges of the four patterns are about -3 to 3, we set $R$ as the mean value $ 0 $.
Then we randomly embed some time series of class 1 of each UCR dataset into corresponding steaming data with certain occurrence probabilities \cite{begum_rare_2014}.
These embedded sequences are regarded as possible correct candidates since they belong to the same class as the corresponding pattern.

\noindent \textbf{Algorithm}.
We compare our approaches to SS and MSM \cite{lian_multiscale_2009}.
SS matches each window sequentially by the baseline solution, mentioned in Section~\ref{subsub:strawman}.
MSM builds a hierarchical grid index for the pattern and computes a multi-scaled representation for each window.
We adopt its batch version where the batch size is equal to ELB block size for fair comparison, 
and perform three schemes of MSM to choose the best one: stop the pruning phase at the first level of its grid index(MSM-1), the second level (MSM-2), or never early stop the pruning phase(MSM-MAX).
Although MSM cannot be used for fine-grained pattern directly,  
we can fix it by transforming a fine-grained pattern into a series of fixed-length patterns with a larger threshold. 
For a fine-grained pattern $ P $, we traverse it to $ b $ number of segments $ P'_1, P'_2, \cdots, P'_{b} $, where $ P'_k = P[r_{k-1}+1:l_k] $ and $ \varepsilon'_k =  \varepsilon_k \sqrt{\frac{r_{k}-l_{k-1}}{l_{k}-r_{k-1}}} $. We prove that 
this transformation guarantees no false dismissals 
in Appendix B.
In the pruning phase, MSM adopts the baseline solution same as SS.
%

\noindent \textbf{Default Parameter Settings}.
There are three parameters for synthetic datasets: break region size, distance threshold and pattern occurrence probability, and two parameters for our algorithm: block size and BSP policy.
The default size of break region is set to $ 10\% $ of the shortest length among all segments in a pattern.
The default value of \textit{threshold\_ratio} is set to $ 20\% $ and $ 10^{-4} $ respectively.
In terms of algorithm parameters, we set the default value of block size to $ 5\% $ of the pattern length and enable BSP policy.
The impact of all above parameters will be investigated in following sections.

\noindent \textbf{Performance Measurement}.
We measure the average time of all algorithms for processing each sliding window.
Streams and patterns are loaded into memory in advance where data loading time is excluded.
To avoid the inaccuracy due to cold start and random noise, we run all algorithms over 10,000 ms and average them by their cycle numbers.
All experiments are run on 4.00 GHz Intel(R) Core(TM) i7-4790K CPU, with 8GB physical memory.

\subsection{Performance Analysis}
\label{sub:performance of_different_distances}
In this set of experiments, we demonstrate efficiency and robustness of our algorithm. We vary the size of break region (Section~\ref{sub:performance_of_different_boundary_range}), distance threshold (Section~\ref{sub:performance_of_different_thres_factor}) and pattern occurrence probability (Section~\ref{sub:comparision_of_matching_ratio}). 
We also evaluate the impact of block size (Section~\ref{sub:performance_block_size}) and BSP policy comparing to Algorithm~\ref{alg:els_framwork}(Section~\ref{sub:bsp_investigation}).

\subsubsection{Impact of Break Region}
\label{sub:performance_of_different_boundary_range}

In this section, we investigate the impact of the size of break region on performance over all synthetic datasets.

The results are shown in Figure~\ref{fig:7_boundary}.
The x-axis represents the ratio of the size of break region to the length of the shortest pattern segment, which varies from $ 10\% $ to $ 70\% $.
Our two algorithms, ELB-ELE and ELB-SEQ, outperform MSM and SS on all datasets. 
When the ratio of break region size varies from $ 10\% $ to $ 70\% $, the time cost of SS increases faster than MSM and our algorithms. 
In other words, when the size of break region increases, MSM and our algorithms achieve larger speedup comparing to SS.
\begin{figure}[!htb]
	\centering
\begin{subfigure}[b]{0.225\textwidth}
	\centering
	\includegraphics[width=1\textwidth]{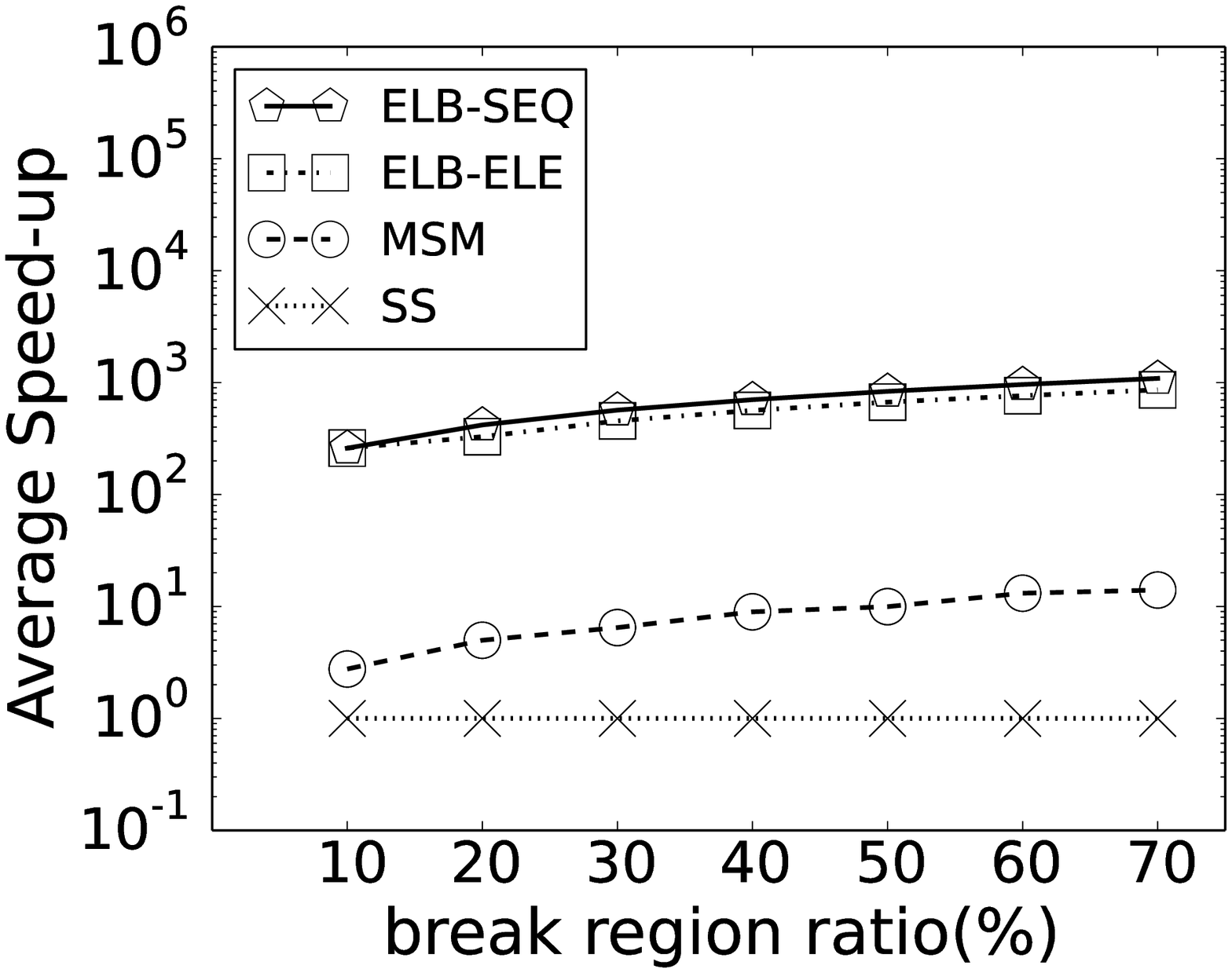}
	\caption[Network2]%
	{UCR\_Strawberry}
\end{subfigure}
\hfill
\begin{subfigure}[b]{0.225\textwidth}  
	\centering 
	\includegraphics[width=1\textwidth]{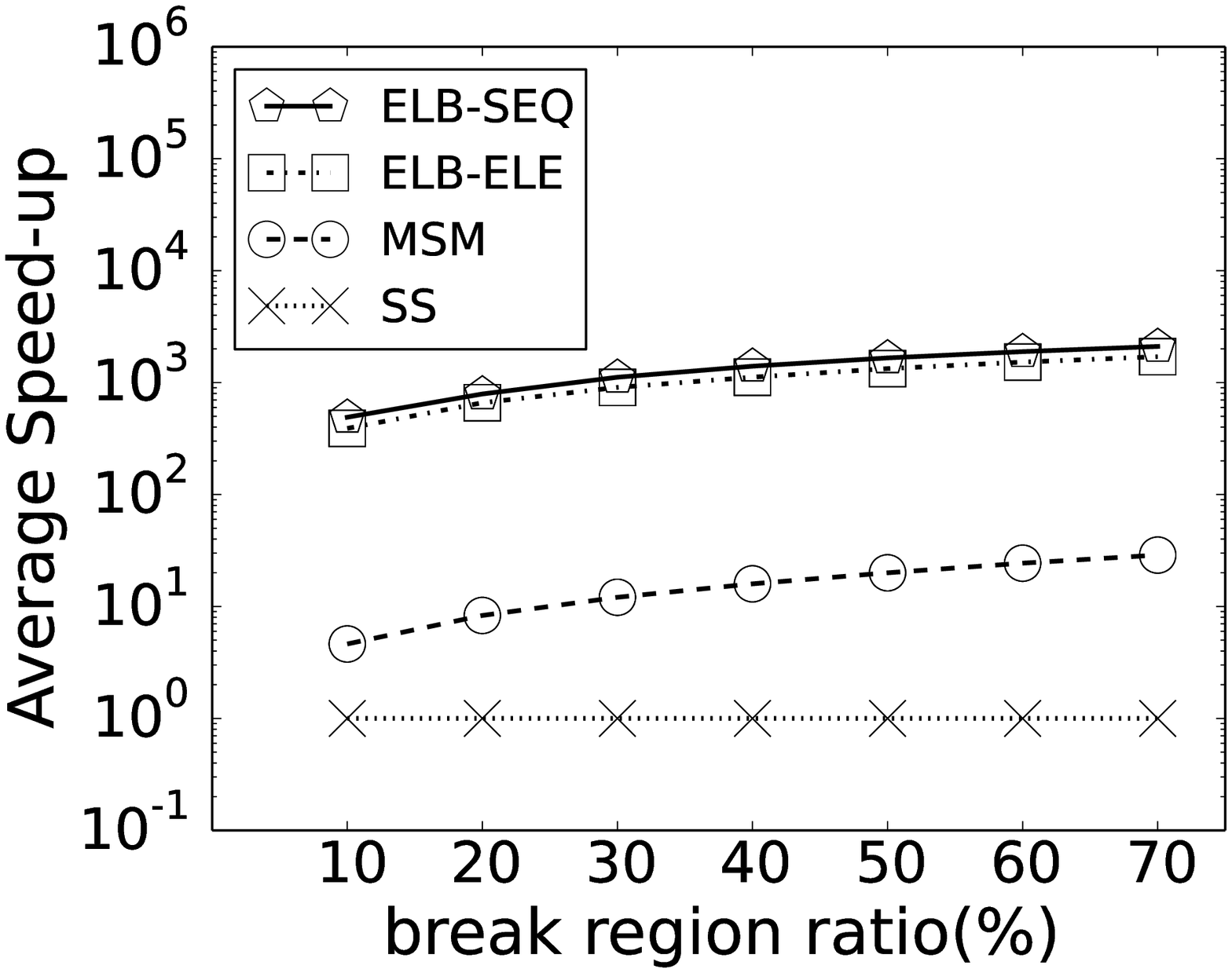}
	\caption[]%
	{UCR\_Meat}
\end{subfigure}
\begin{subfigure}[b]{0.225\textwidth}   
	\centering 
	\includegraphics[width=1\textwidth]{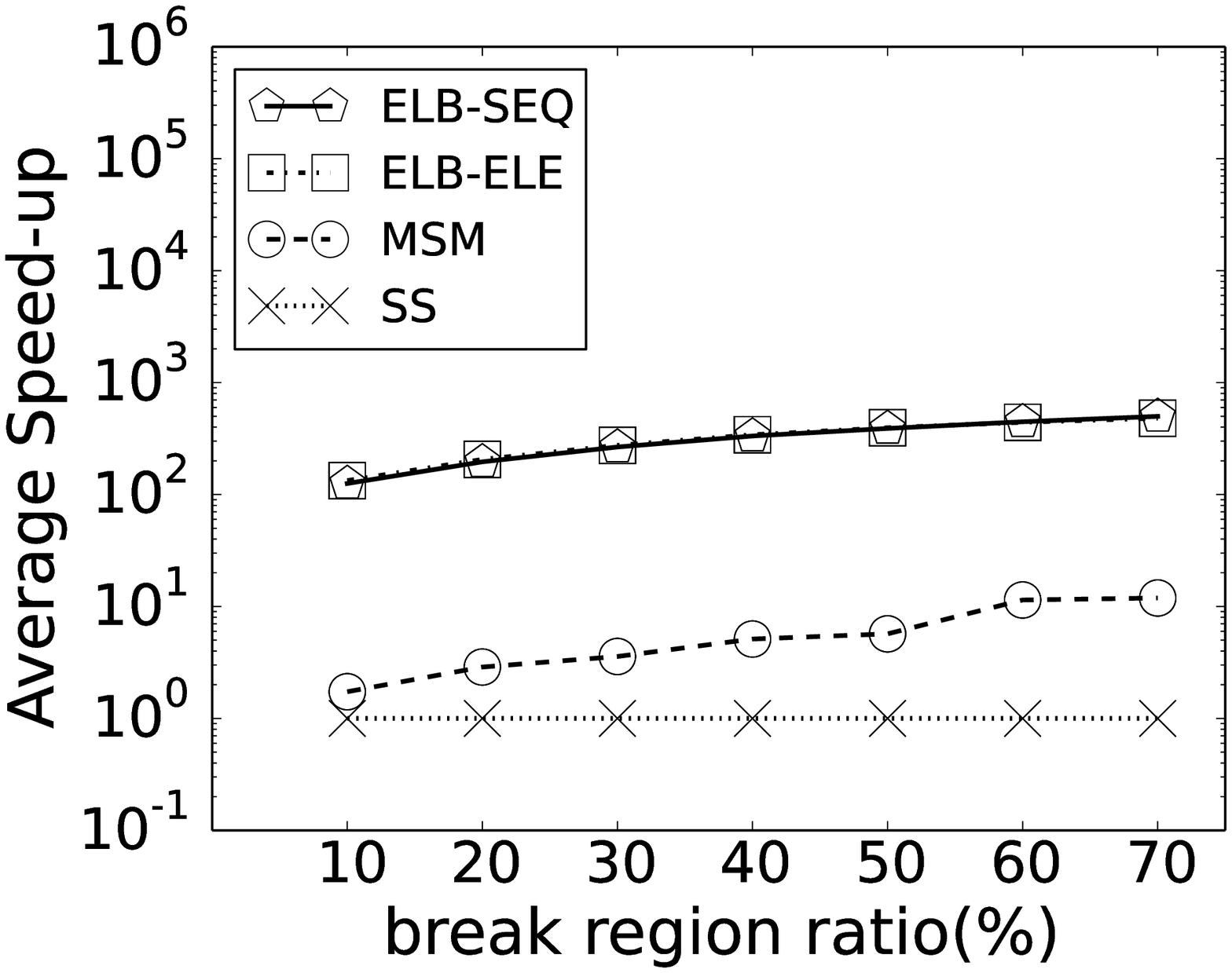}
	\caption[]%
	{UCR\_ECG}    
\end{subfigure}
\hfill
\begin{subfigure}[b]{0.225\textwidth}   
	\centering 
	\includegraphics[width=1\textwidth]{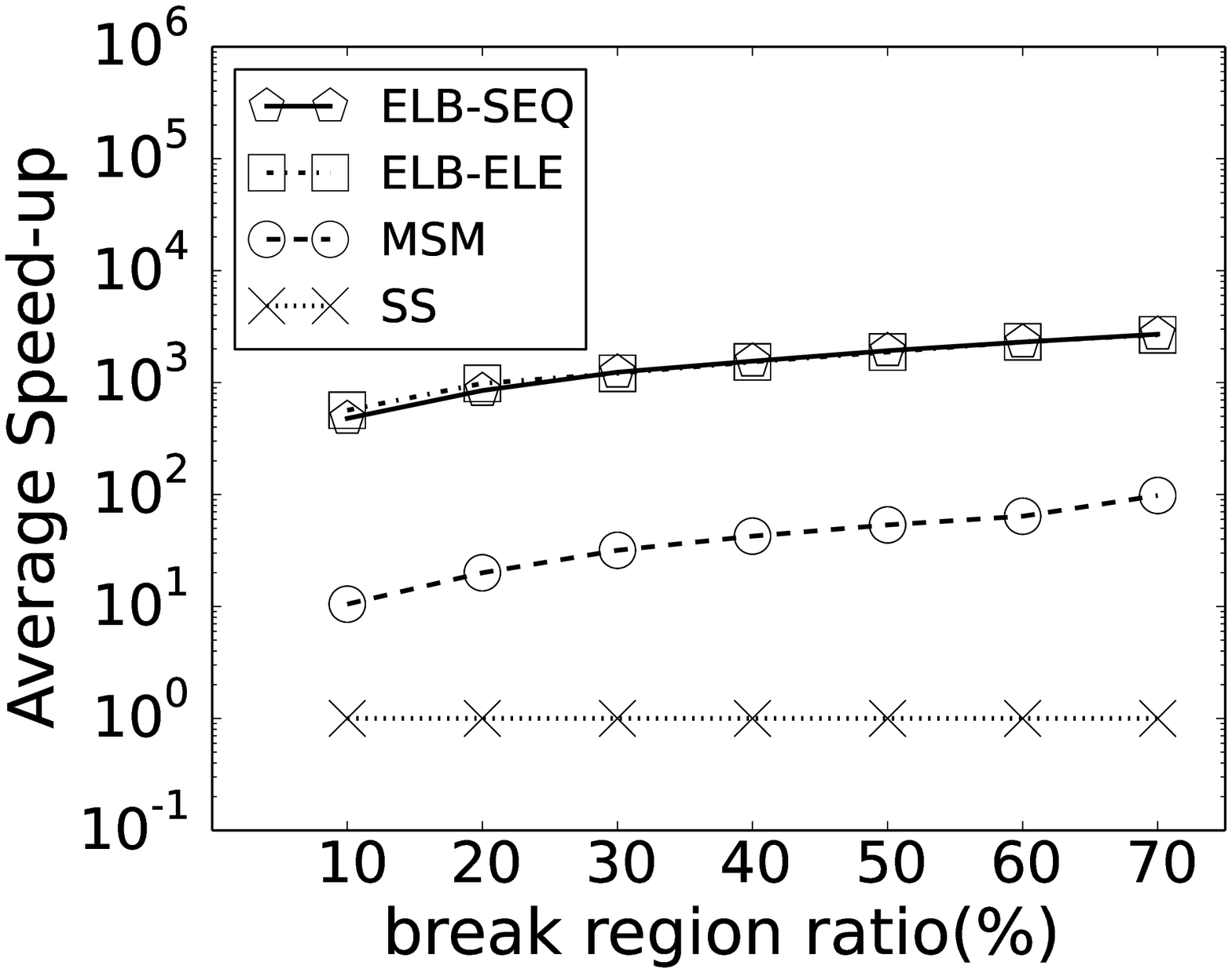}
	\caption[]%
	{UCR\_MALLAT}
\end{subfigure}
	\caption{Speedup vs. break region ratio. 
}
	\label{fig:7_boundary}
\end{figure}

Comparing to SS, our algorithms could prune numerous windows in the pruning phase, while SS has to perform exact matching for each sliding window, resulting in high time cost.
Comparing to SS, the pruning phase of our algorithms is much more efficient than MSM. Although our approaches have more false candidates, benefiting from our adaptive post-processing algorithm, our time cost
of the post-processing phase is similar to MSM.
As a result, our algorithms has an advantage of more than one order of magnitude over MSM.

\subsubsection{Impact of Distance Threshold}
\label{sub:performance_of_different_thres_factor}
In this experiment, we compare the performance of ELB-ELE, ELB-SEQ, SS and MSM when varying distance threshold.
For synthetic datasets, as mentioned in Section~\ref{sub:experimental_Settings}, we vary \textit{threshold\_ratio} from 5\% to 30\%.
For real-world datasets, Domain experts have pre-defined normalized Euclidean thresholds for segments of all patterns. To investigate the impact of threshold, we vary the ratio of the testing threshold to the pre-defined value from 80\% to 120\%. 

The result on synthetic datasets is shown in Figure~\ref{fig:4_threshold}. 
The performances of our two algorithms are very similar in synthetic datasets.
Both ELB-ELE and ELB-SEQ outperforms MSM and SS by orders of magnitude.
As the threshold gets larger, the speedups of ELB-ELE and ELB-SEQ decrease slightly.
Nevertheless, our algorithms keep their advantage over other approaches even though \textit{threshold\_ratio} increases to 30\%.

\begin{figure}[!htb]
	\centering
	\begin{subfigure}[b]{0.225\textwidth}
		\centering
		\includegraphics[width=1\textwidth]{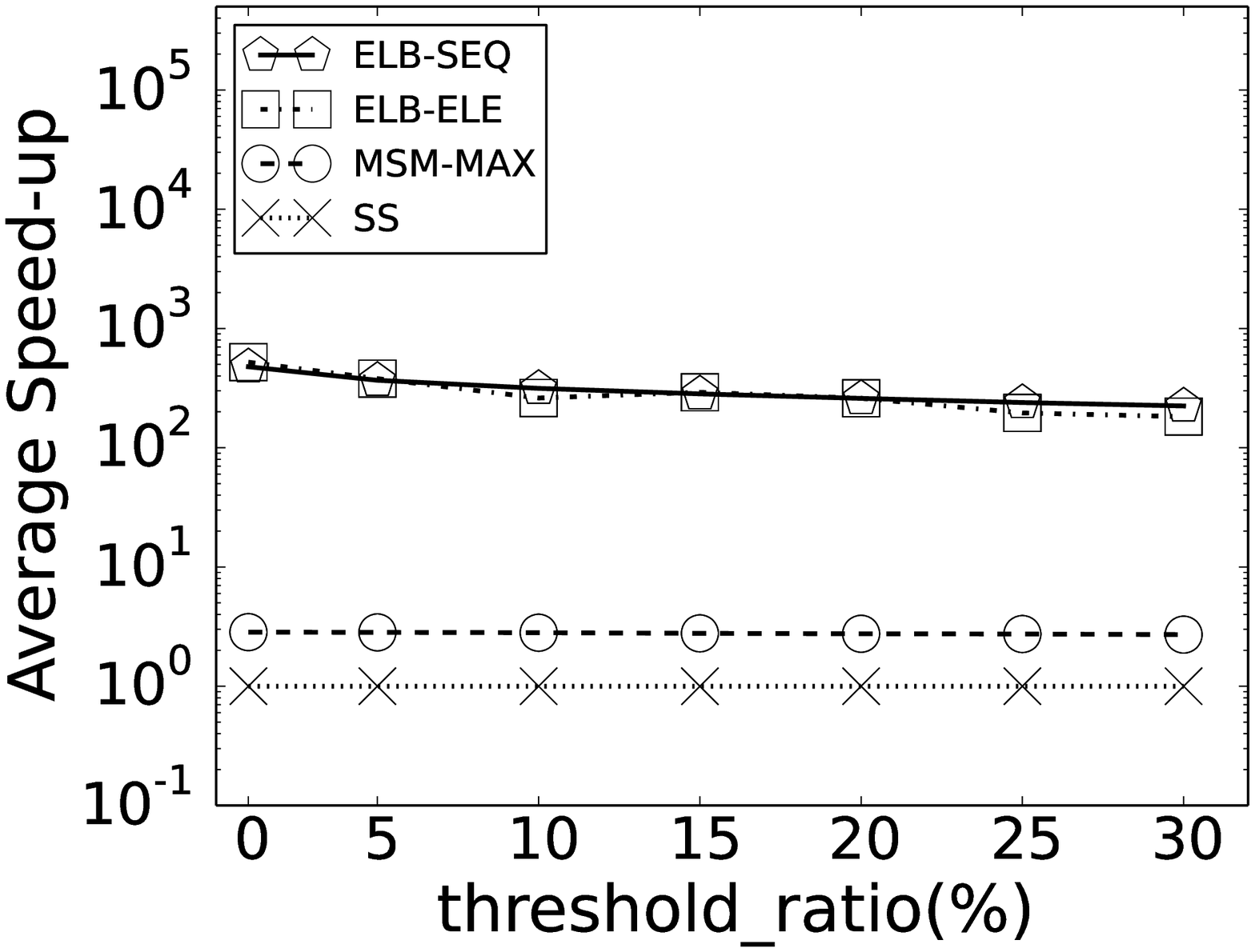}
		\caption[Network2]%
		{UCR\_Strawberry}    
	\end{subfigure}
	\hfill
	\begin{subfigure}[b]{0.225\textwidth}  
		\centering 
		\includegraphics[width=1\textwidth]{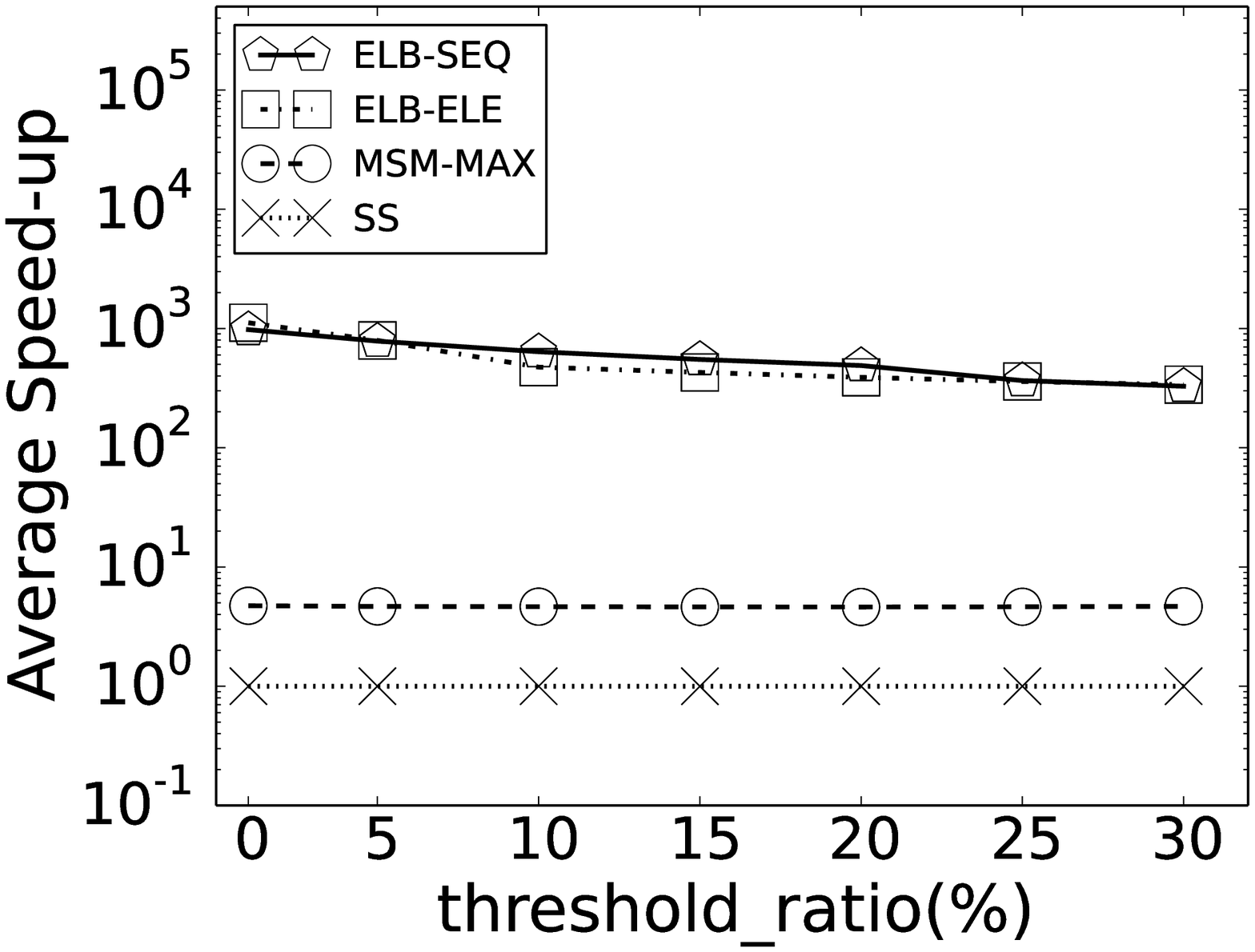}
		\caption[]%
		{UCR\_Meat}    
	\end{subfigure}
	\begin{subfigure}[b]{0.225\textwidth}   
		\centering 
		\includegraphics[width=1\textwidth]{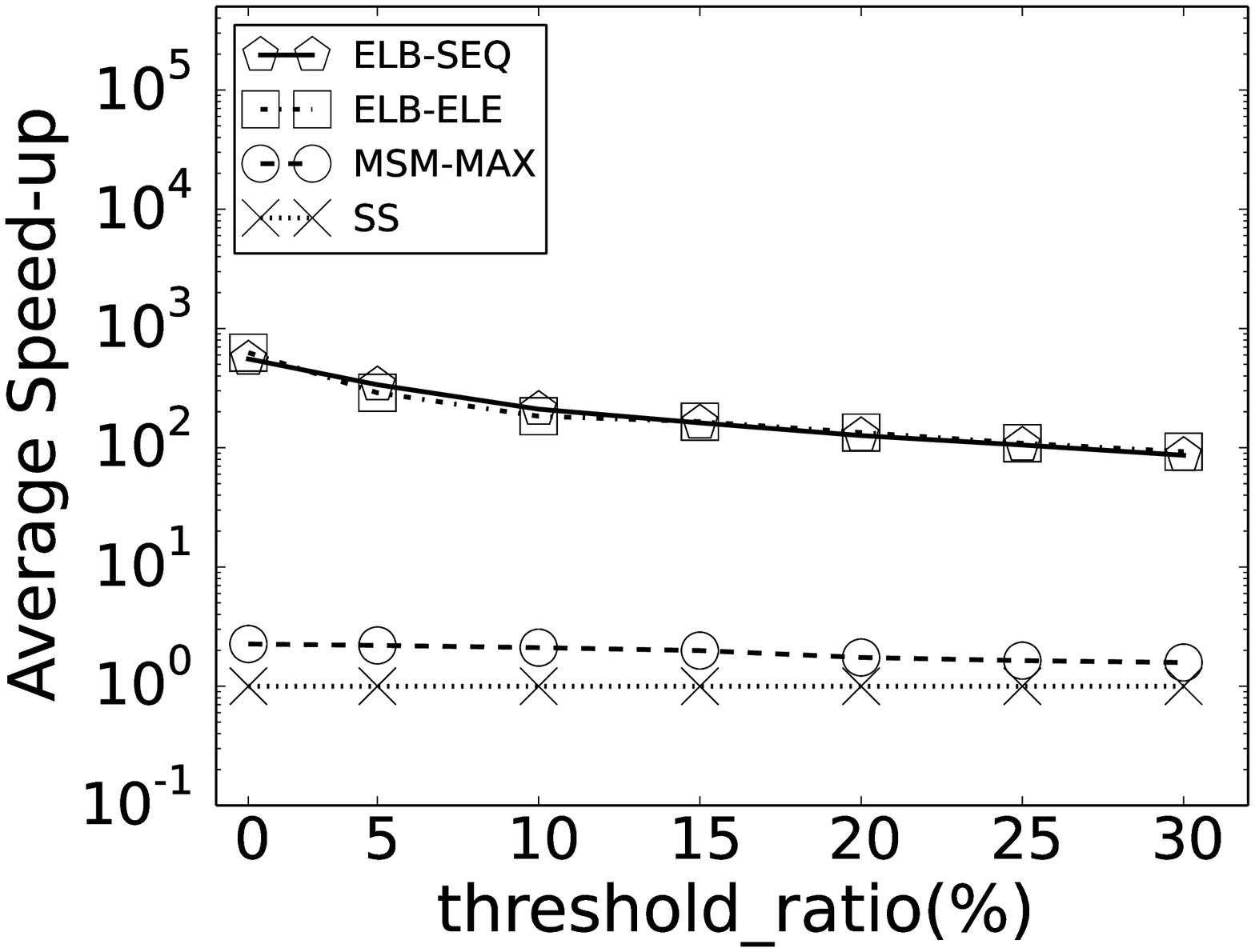}
		\caption[]%
		{ UCR\_ECG}
	\end{subfigure}
	\hfill
	\begin{subfigure}[b]{0.225\textwidth}   
		\centering 
		\includegraphics[width=1\textwidth]{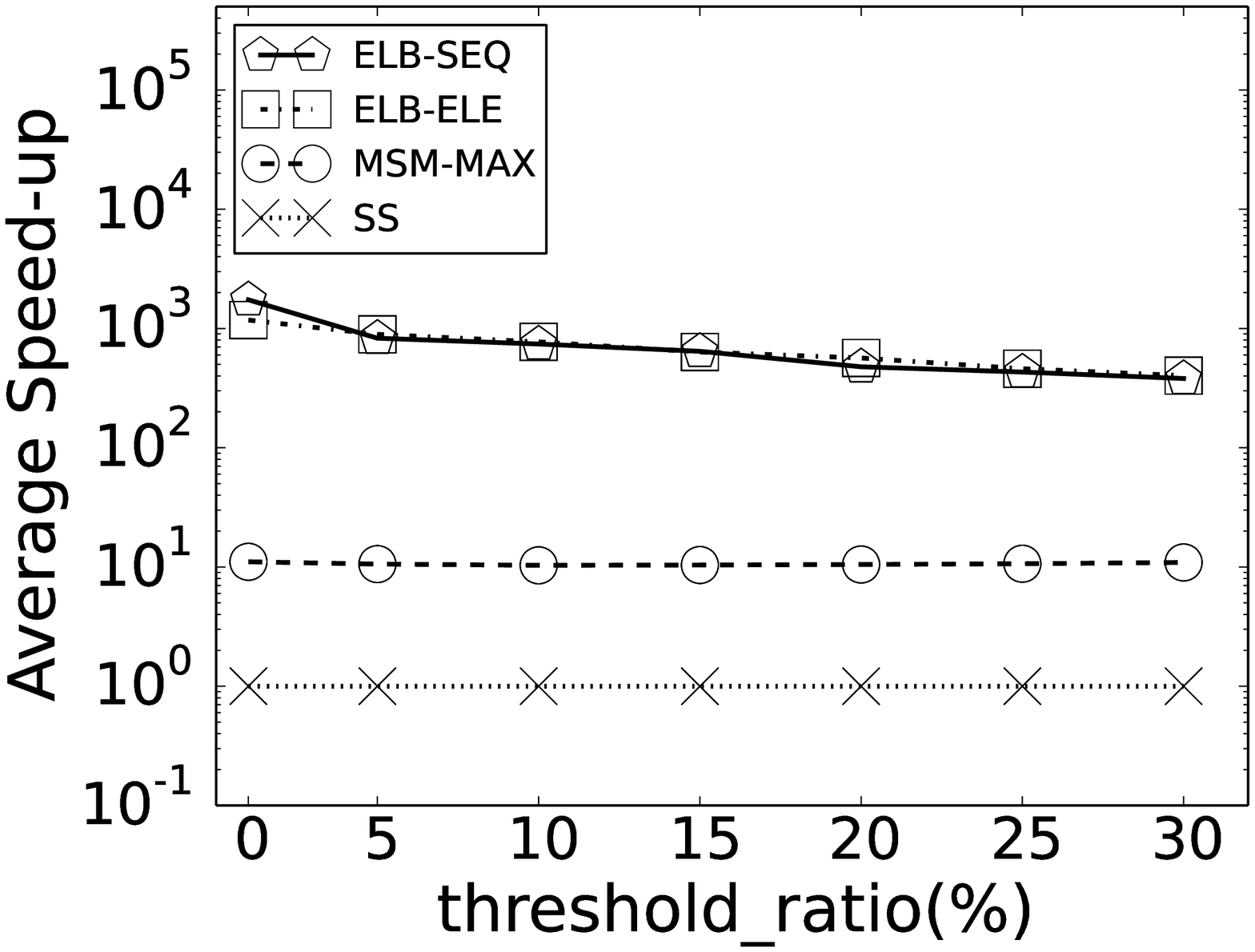}
		\caption[]%
		{UCR\_MALLAT}
	\end{subfigure}
	\caption{Speedup vs. \textit{threshold\_ratio} on synthetic datasets}
	\label{fig:4_threshold}
\end{figure}

The result on real-world datasets is shown in Figure~\ref{fig:5_distance}.  We show two representative figures and the others are consistent.
Our algorithms show better performance on different thresholds than others over all real-world datasets. 
Moreover, ELB-SEQ outperforms ELB-ELE clearly and remains stable when the threshold increases.
As shown in Figure~\ref{fig:5_distance}(b), the performance of ELB-ELE degrades when the threshold becomes larger. This is because the number of false candidates of ELB-ELE increases dramatically.
\begin{figure}[!htb]
	\centering
	\begin{subfigure}[b]{0.225\textwidth}
		\centering
		\includegraphics[width=1\textwidth]{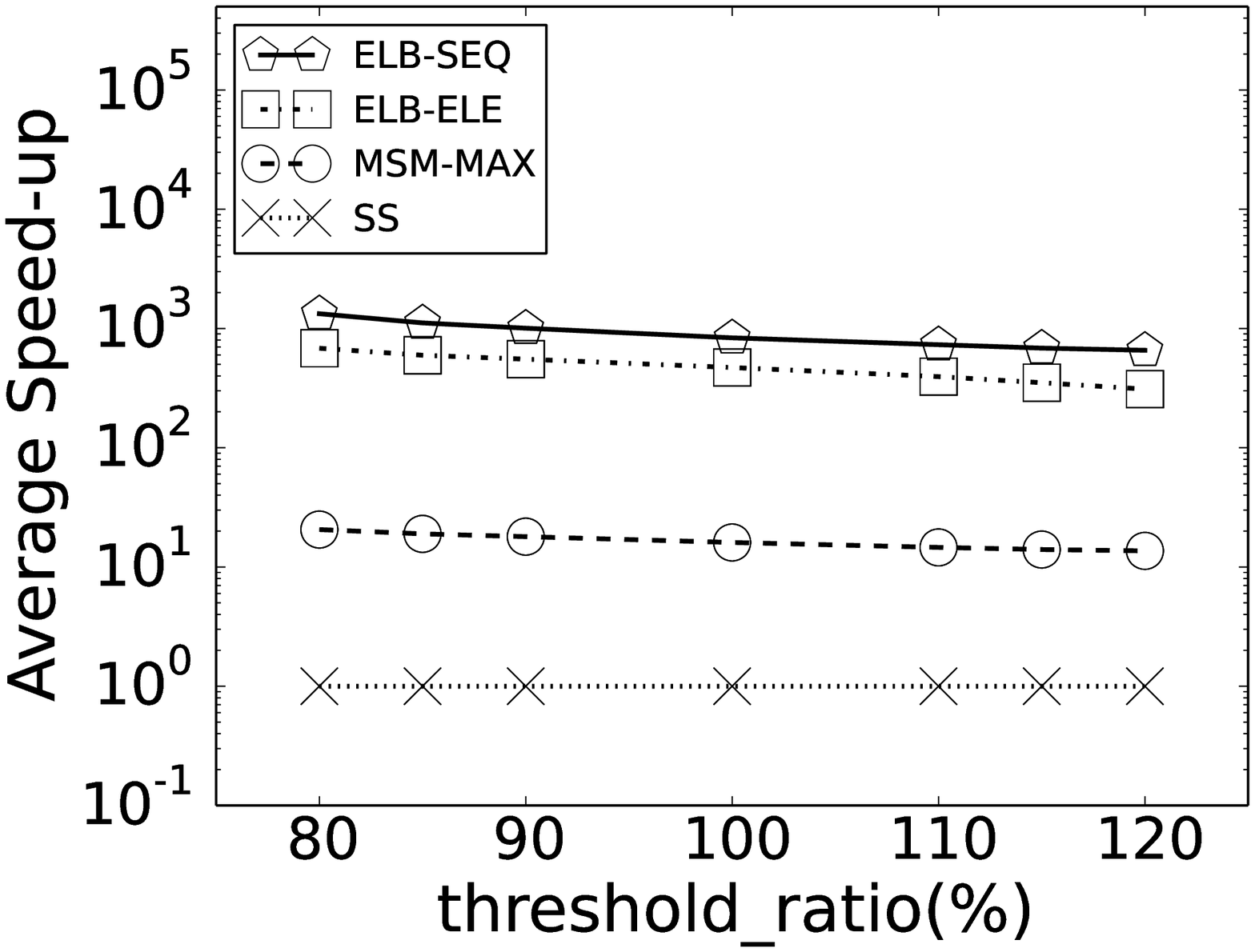}
		\caption[Network2]%
		{wind speed}    
	\end{subfigure}
	\hfill
	\begin{subfigure}[b]{0.225\textwidth}  
		\centering 
		\includegraphics[width=1\textwidth]{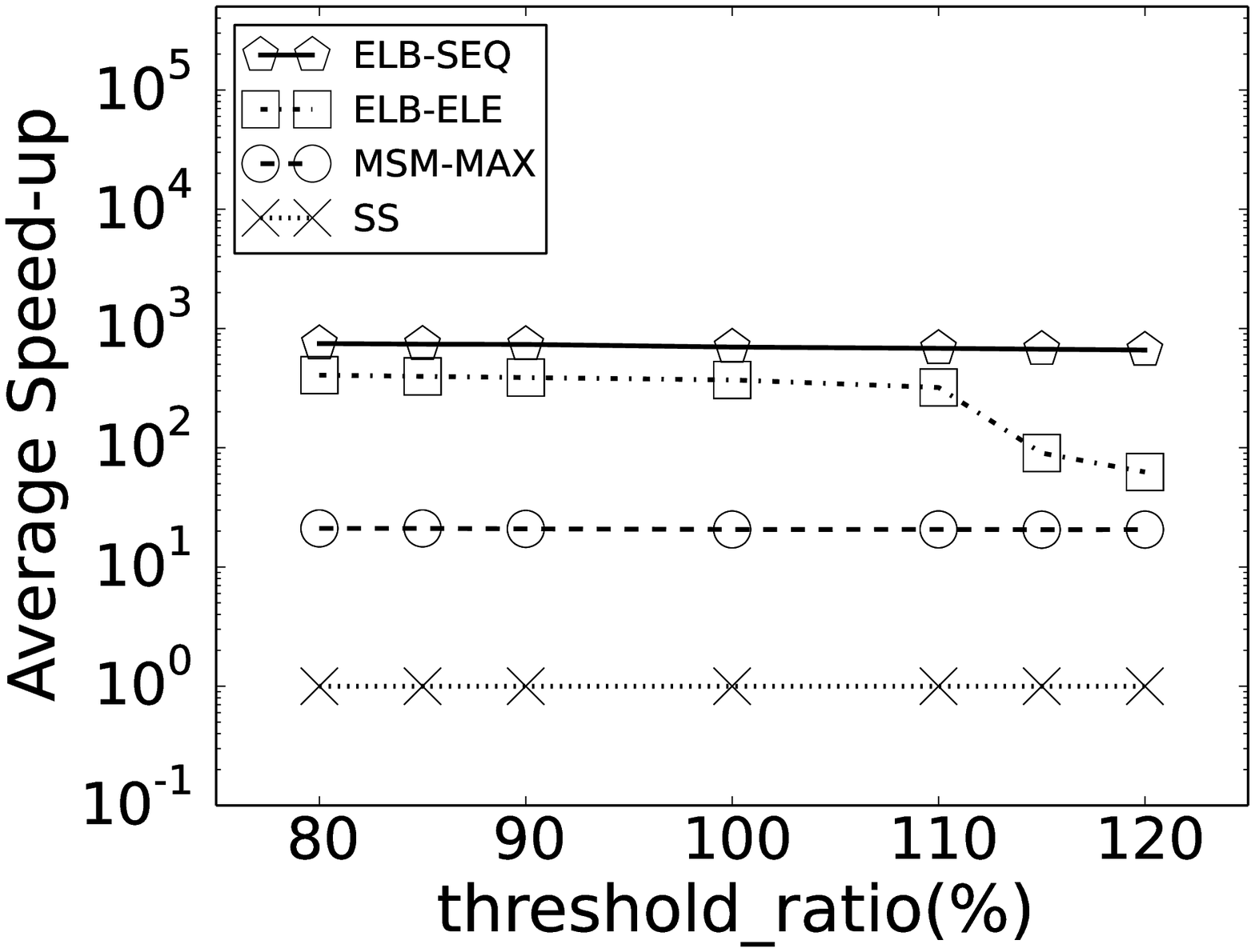}
		\caption[]%
		{generator speed}    
		\label{fig:5_distance-generator-speed}
	\end{subfigure}
	\caption{Speedup vs. threshold on real-world datasets}
	\label{fig:5_distance}
\end{figure}

According to implementations of $ ELB_{ele} $ and $ ELB_{seq} $, 
we know that the former is more efficient than the latter on calculating feature of window block.
In contrast, the bound of  $ ELB_{seq} $ is usually tighter than that of $ ELB_{ele} $.
Although ELB-ELE spends less time on the pruning phase than ELB-SEQ, 
it makes much more false candidates.
Comparing to synthetic datasets, patterns on real-world datasets have less deviation from corresponding streams,
which makes the pruning power of ELB-ELE degrade and the total cost increase on real-world datasets when the threshold gets larger.

\subsubsection{Impact of Pattern Occurrence Probability} 
\label{sub:comparision_of_matching_ratio}
In this section, we further examine the performance when varying
the pattern occurrence probability.
When the occurrence probability becomes lower, more windows will be filtered out in the pruning phase.
In contrast, when the probability becomes higher, more windows will enter the post-processing phase.
These two cases bring challenges to both pruning and post-processing phases.
A good approach should be robust to both situations.

We perform this experiment on synthetic datasets
varying the occurrence probability over 
$ \{10^{-3}, 5 \times 10^{-4}, 10^{-4}, 5\times 10^{-5},10^{-5} \}$.
Since the maximal length of four synthetic patterns is $ 1024 $ (in MALLAT dataset), 
the maximal testing probability is set to about $1/1024$.
Figure~\ref{fig:6_pattern_rate} illustrates the results. 
It can be seen that our algorithms still outperform MSM and SS in all examined probabilities. 
Furthermore, our algorithms show a larger speedup when the pattern occurrence probability become lower.
This experiment demonstrates the robustness of our algorithms over different occurrence probabilities.
\begin{figure}[!htb]
	\centering
	\begin{subfigure}[b]{0.225\textwidth}
		\centering
		\includegraphics[width=1\textwidth]{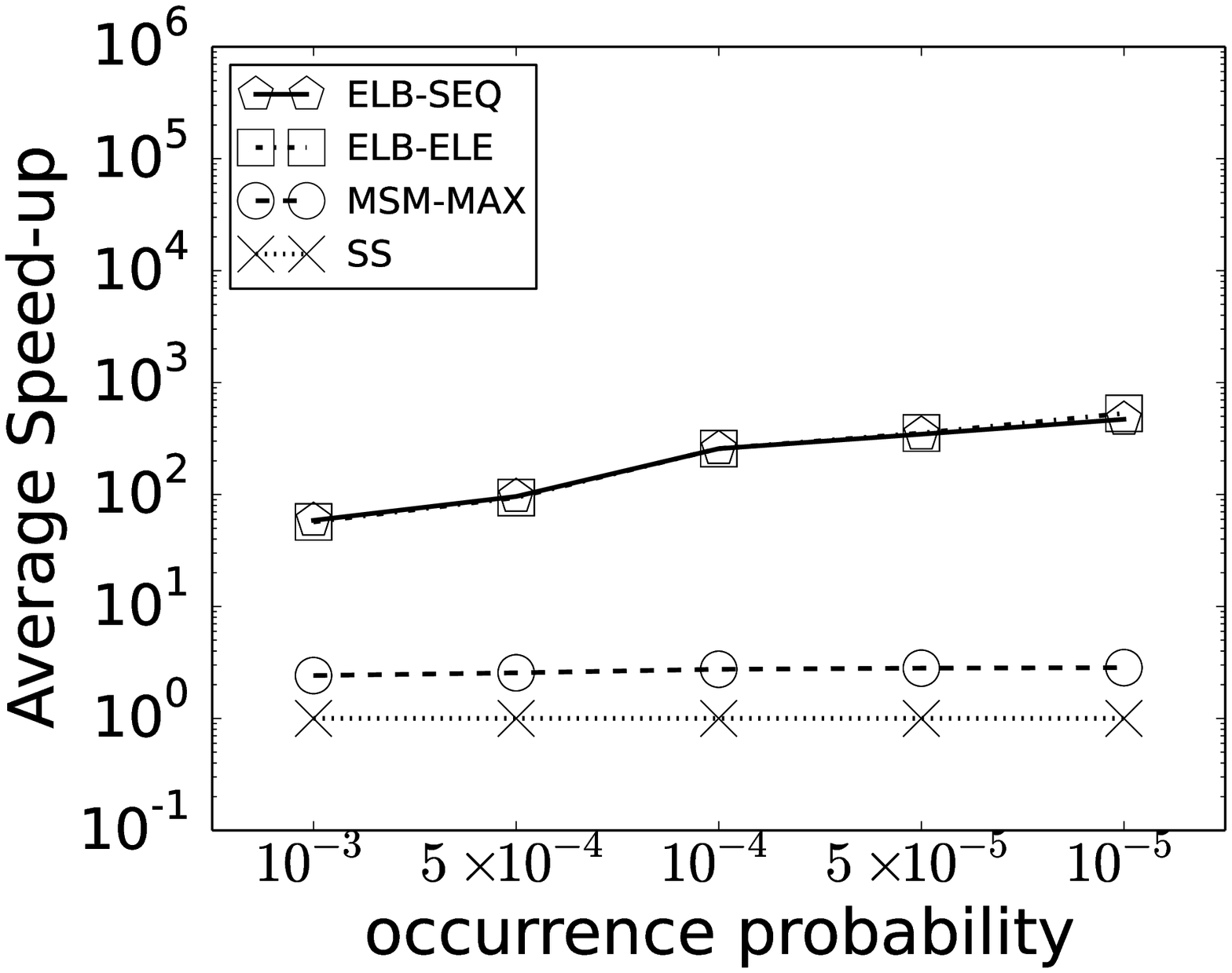}
		\caption[Network2]%
		{UCR\_Strawberry}    
	\end{subfigure}
	\hfill
	\begin{subfigure}[b]{0.225\textwidth}  
		\centering 
		\includegraphics[width=1\textwidth]{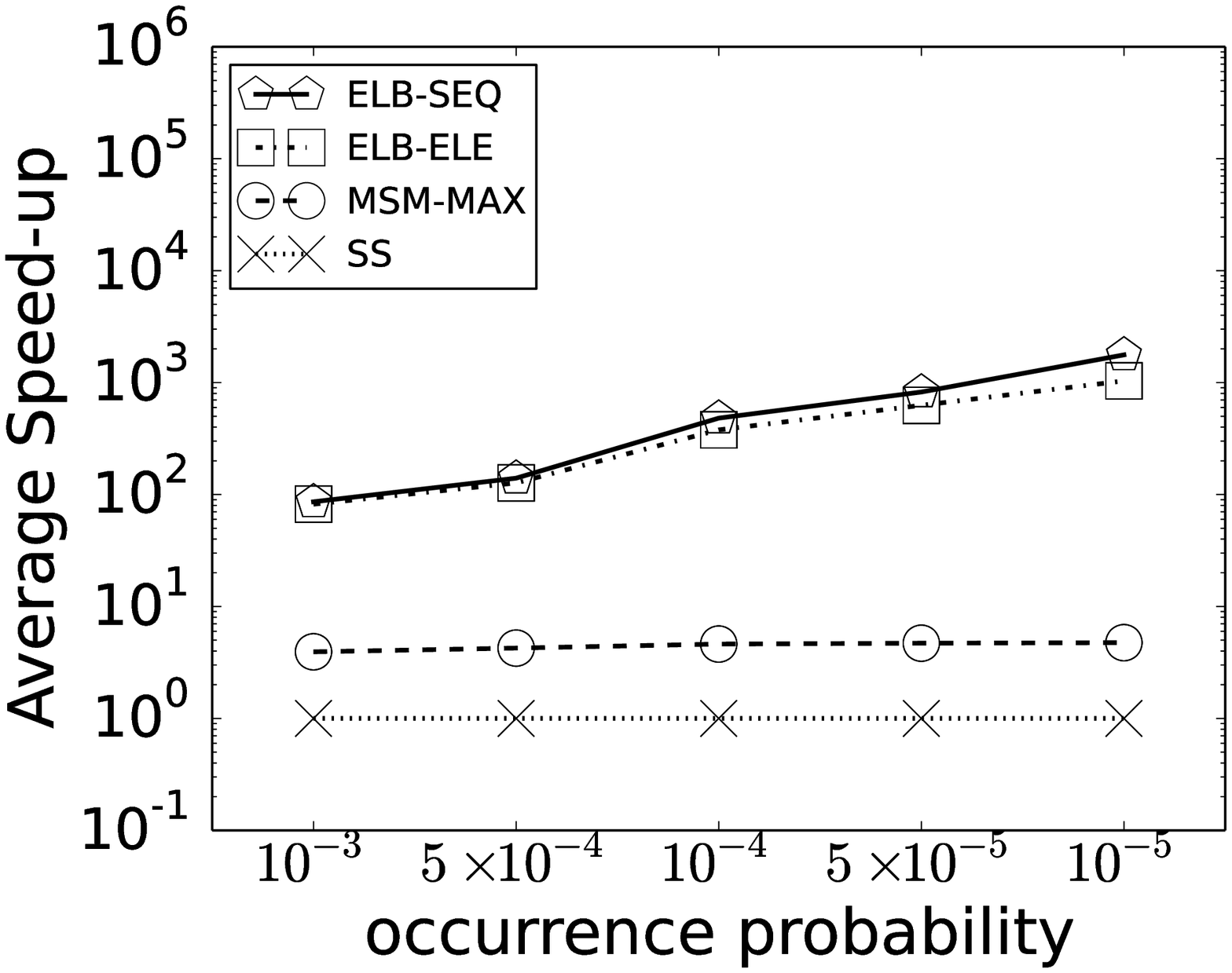}
		\caption[]%
		{UCR\_Meat}    
	\end{subfigure}
	\begin{subfigure}[b]{0.225\textwidth}   
		\centering 
		\includegraphics[width=1\textwidth]{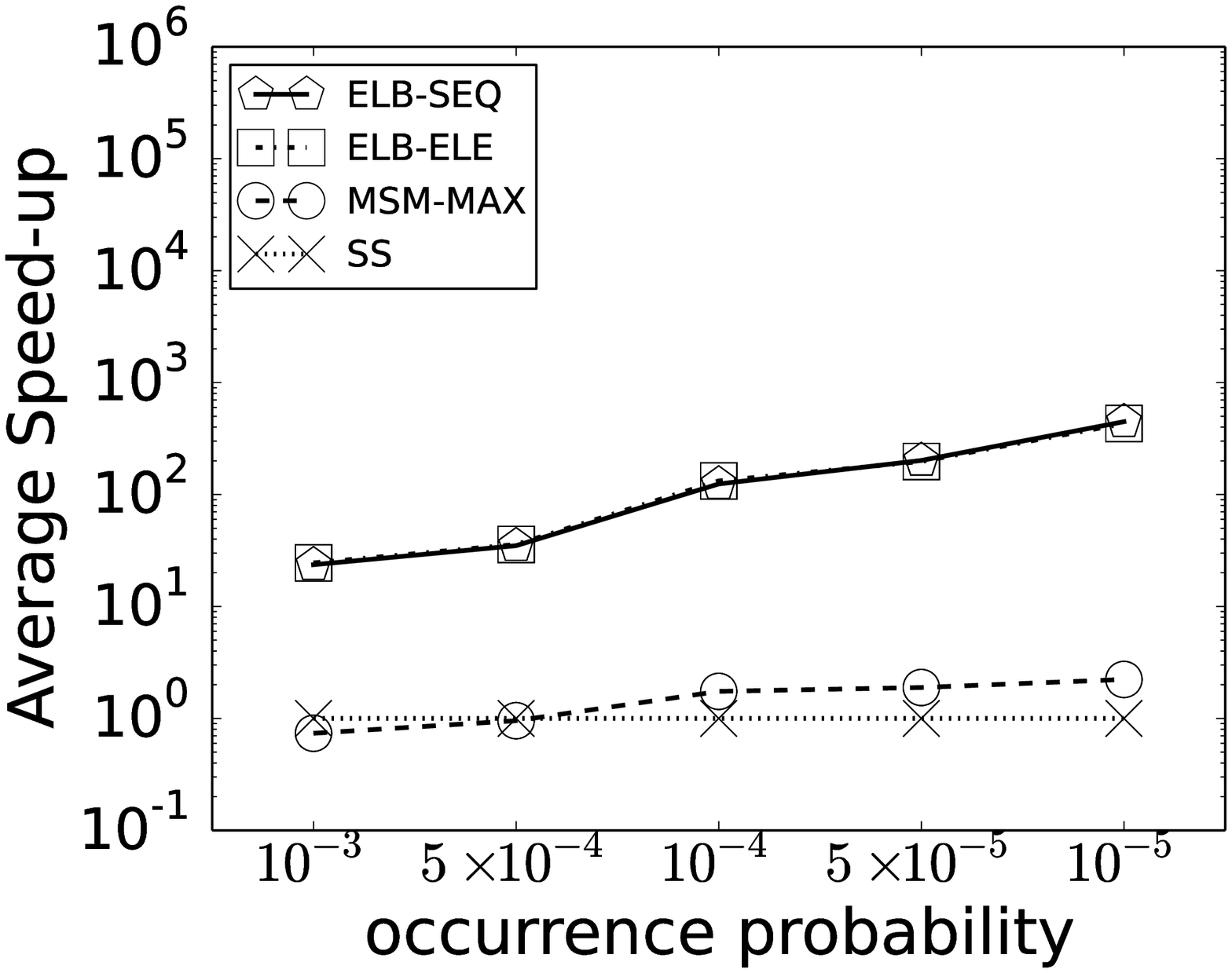}
		\caption[]%
		{UCR\_ECG}
	\end{subfigure}
	\hfill
	\begin{subfigure}[b]{0.225\textwidth}   
		\centering 
		\includegraphics[width=1\textwidth]{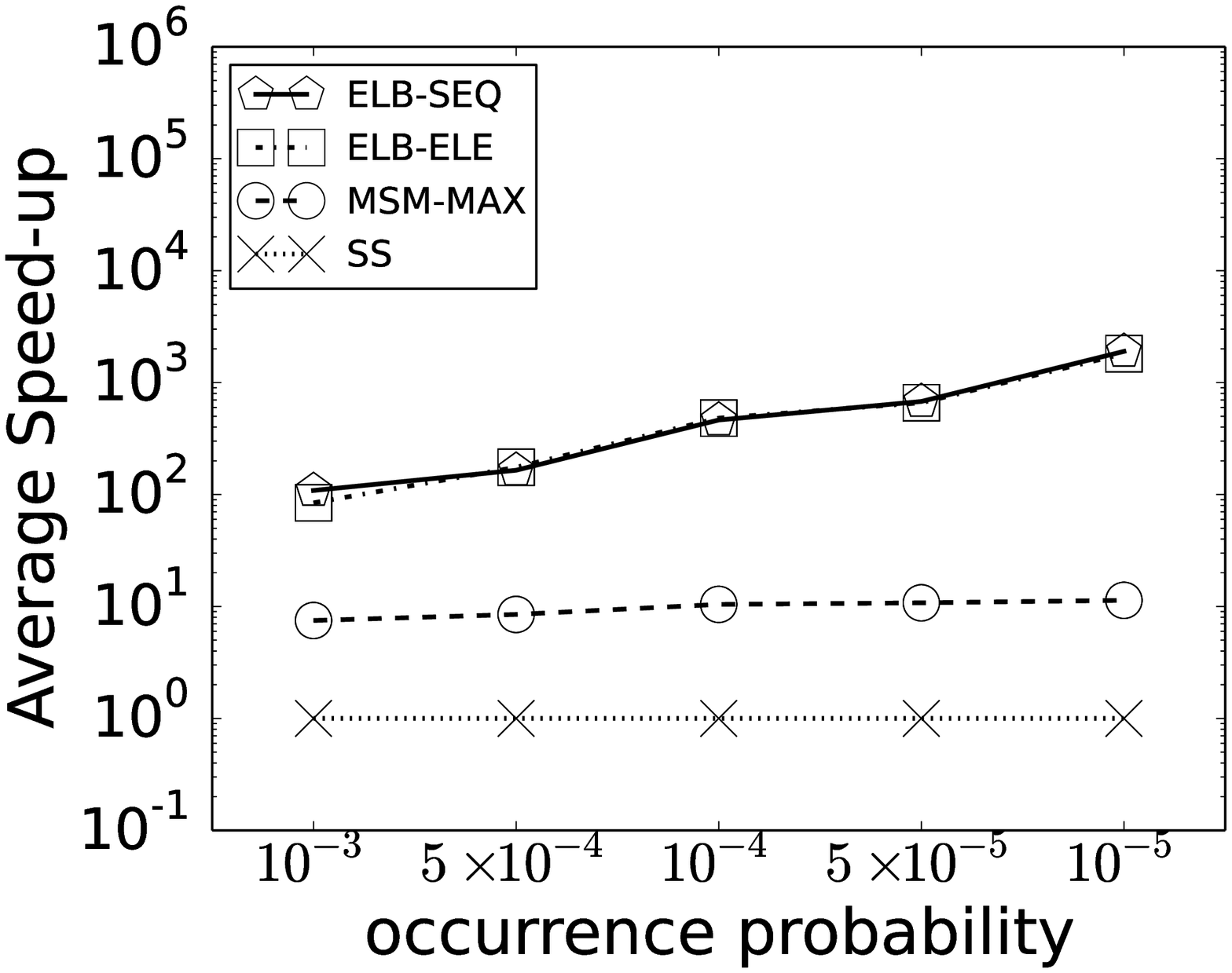}
		\caption[]%
		{UCR\_MALLAT}
	\end{subfigure}
	\caption{Speedup vs. pattern occurrence probability}
	\label{fig:6_pattern_rate}
\end{figure}

\subsubsection{Impact of Block Size} 
\label{sub:performance_block_size}
The block size is an important parameter affecting the pruning power of our approach.
In this experiment, we investigate the impact of block size by comparing ELB-ELE,  ELB-SEQ and MSM on both synthetic and real-world datasets.
We vary the ratio of the block size to the pattern length from $1\%$ to $40\%$.
A ratio being larger than $50\%$ indicates that the entire pattern contains only one block, which makes BSP meaningless.

Figure~\ref{fig:1_window_block} shows the experimental results on some representative synthetic and real-world datasets. The rest are consistent. 
A too small or too large block size results in performance degradation.
In detail, 
a smaller block size leads to a tighter bound for each block which improves the pruning effectiveness.
Nevertheless,  a small block size, corresponding to a small sliding step, results in more block computation and 
higher cost in the pruning phase.
A larger block size may bring less block computation but a looser bound. 
The loose bound incurs degradation of the pruning effectiveness.
In practice, our algorithms achieves the optimal performance when the block size is about $5\%$ to $10\%$ of the pattern length.
\begin{figure}[!htb]
	\centering
	\begin{subfigure}[b]{0.225\textwidth}
		\centering
		\includegraphics[width=1\textwidth]{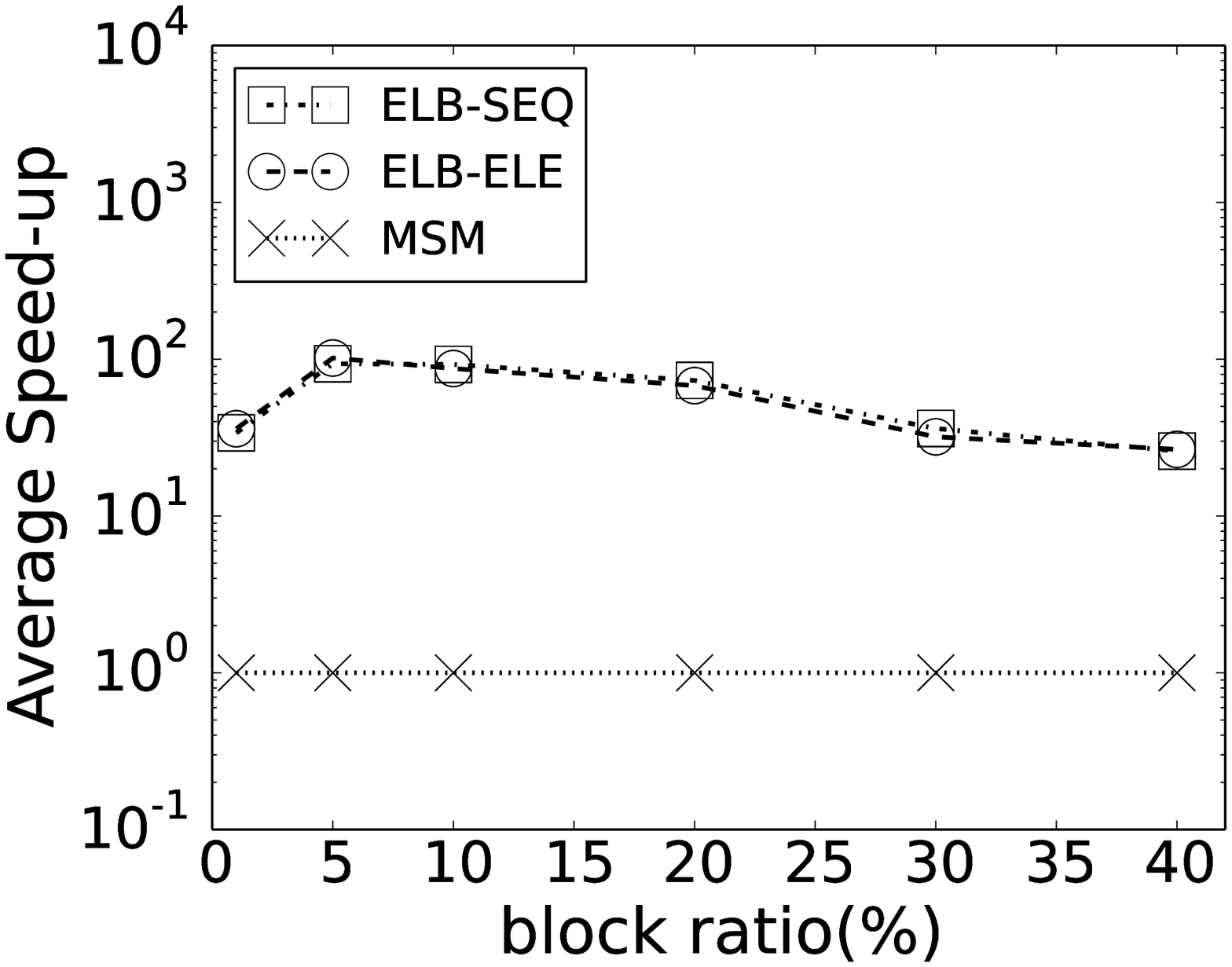}
		\caption[]%
		{ UCR\_Strawberry}
	\end{subfigure}
	\hfill
	\begin{subfigure}[b]{0.225\textwidth}
		\centering
		\includegraphics[width=1\textwidth]{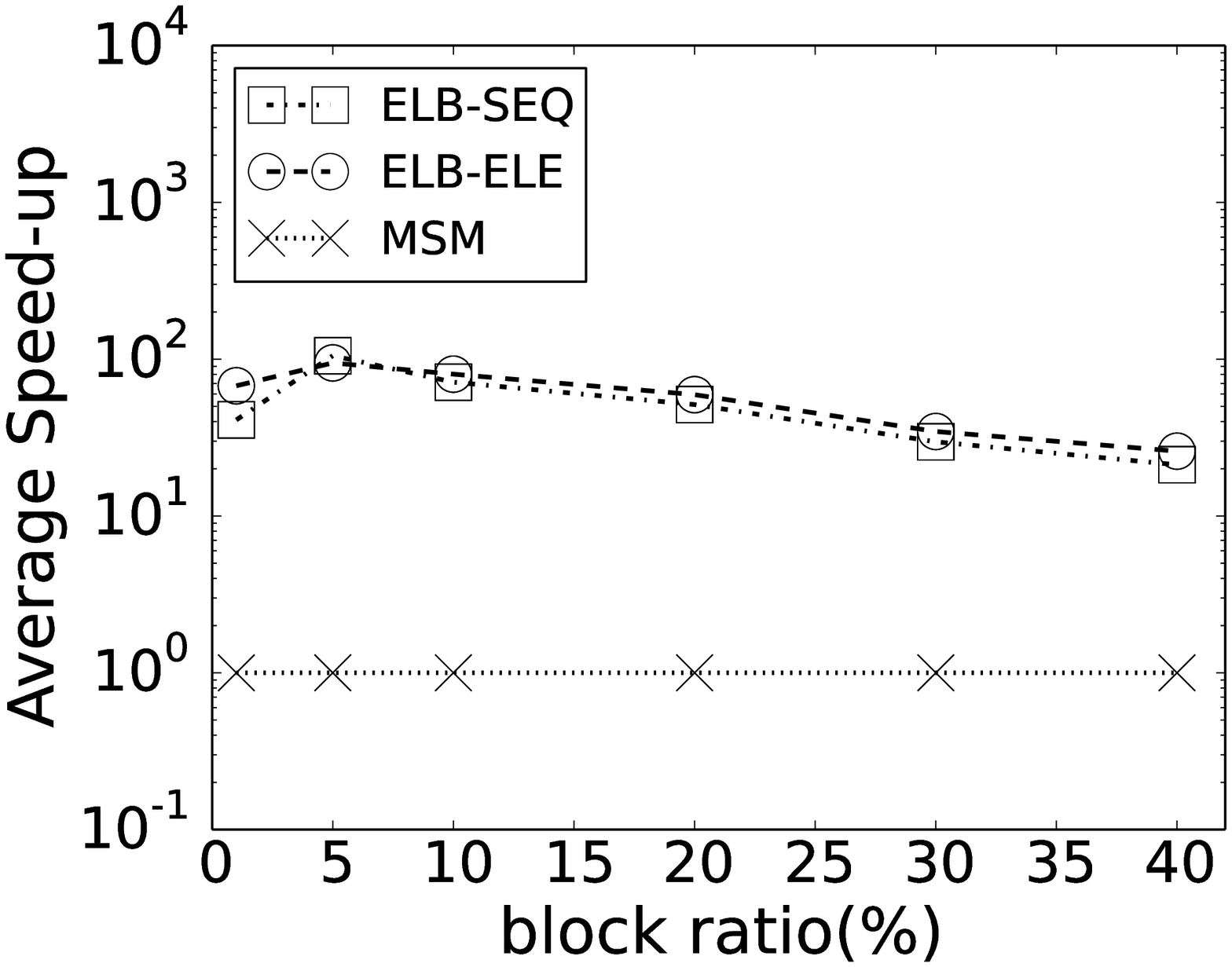}
		\caption[]%
		{ UCR\_Meat}
	\end{subfigure}
	\begin{subfigure}[b]{0.225\textwidth}   
		\centering 
		\includegraphics[width=1\textwidth]{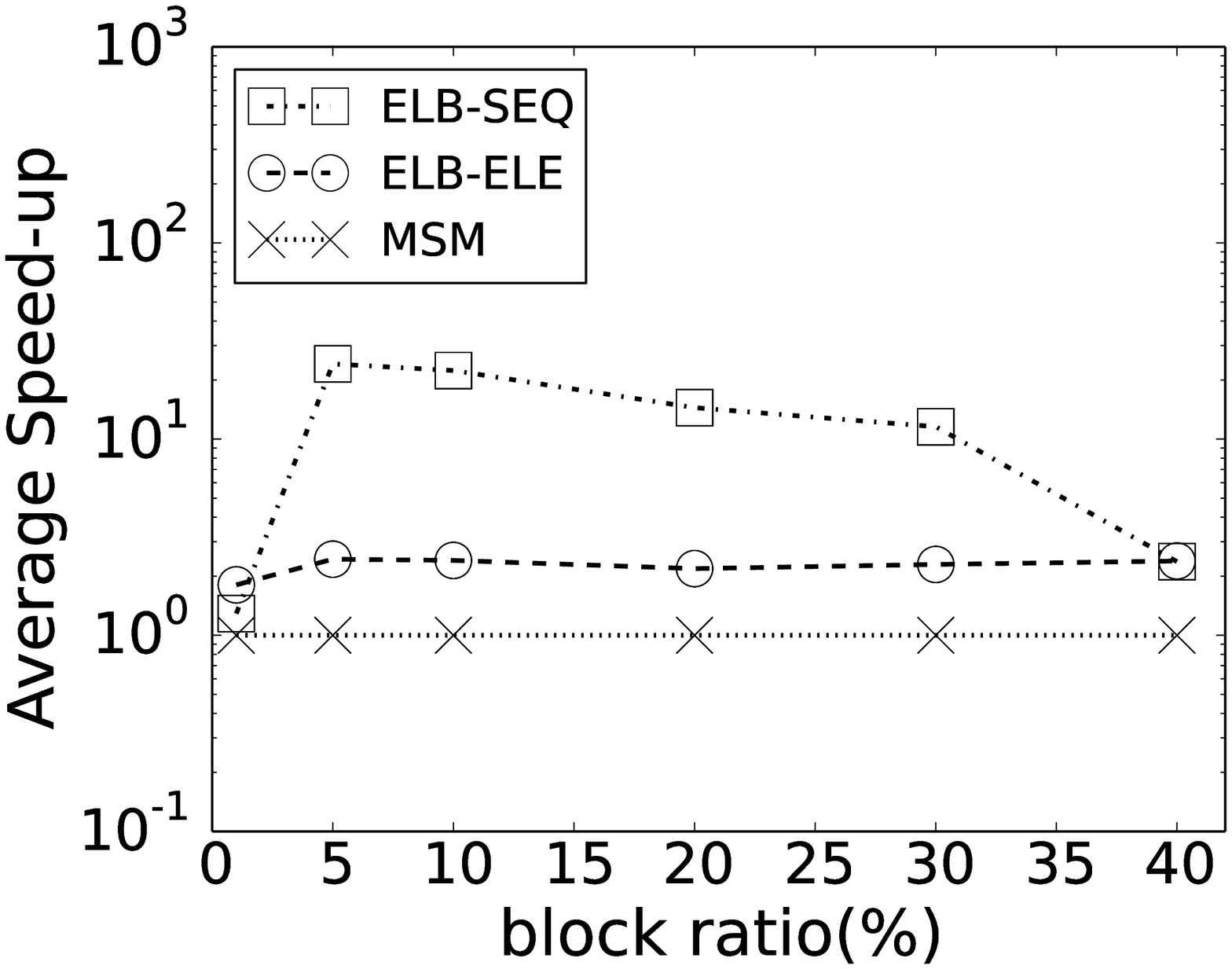}
		\caption[]%
		{ generator speed
	}
	\end{subfigure}
	\hfill
	\begin{subfigure}[b]{0.225\textwidth}   
		\centering 
		\includegraphics[width=1\textwidth]{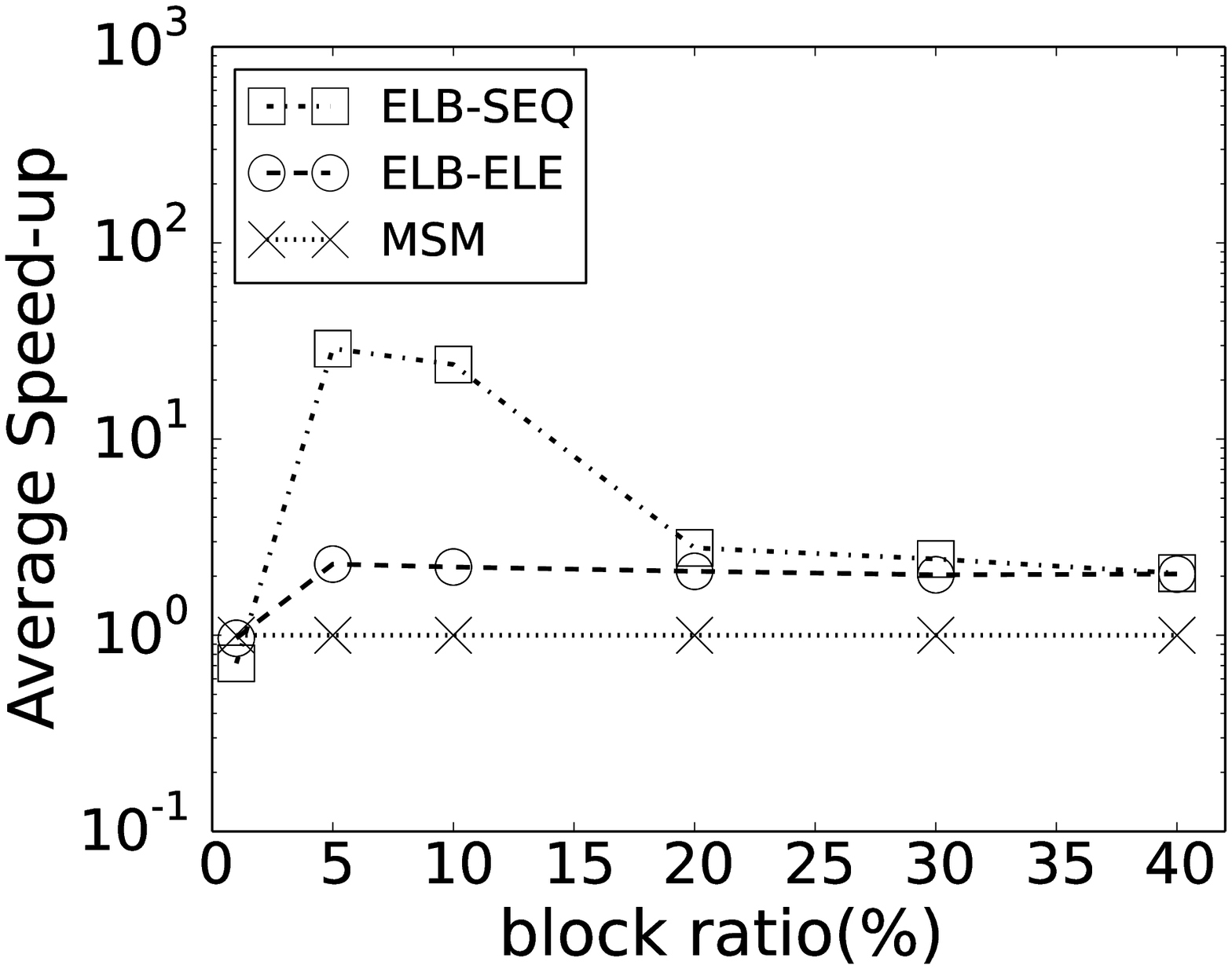}
		\caption[]%
		{ converter power}
	\end{subfigure}
	\caption{Speedup vs. \textit{block\_ratio}.}
	\label{fig:1_window_block}
\end{figure}

\subsubsection{Impact of BSP Optimization} 
\label{sub:bsp_investigation}
In this section, we investigate the effectiveness of the BSP policy.
BSP policy only accelerates the pruning phase. 
In terms of the cost of the pruning phase, 
the algorithm with BSP outperforms that without BSP
on all synthetic datasets and $ 80\% $ real-world datasets.
On synthetic datasets, the benefit of BSP is significant since the cost of the pruning phase is comparable with the cost of the post-processing phase.
On real-world datasets, however, the benefit of BSP is relatively small because the cost of the pruning phase is much smaller than that of the post-processing phase.
Considering the total cost, the algorithm with BSP policy provides an average of $ 27.44\% $ speedup on synthetic datasets, yet an average of $ 1.16\% $ speedup on real-world datasets.

	\section{Related Work}
\label{sec:relate_work}
There are two categories of the related works, multiple patterns matching over streaming  time series and subsequence similarity search.

\noindent  \textbf{Multiple patterns matching over streaming time series }.
Traditional single pattern matching over stream is relatively trivial, hence recent research works put more focus on optimizing the multiple pattern scenario. 
Atomic wedge \cite{wei_atomic_2005} is proposed to monitor stream with a set of pre-defined patterns, which exploits the commonality among patterns.
Sun et al. \cite{sun_matching_2009} extends atomic wedge for various length queries and tolerances.
Lian et al. \cite{lian_similarity_2007} propose a multi-scale segment mean (MSM) representation to detect static patterns over streaming time series, and discuss the batch processing optimization and the case of dynamic patterns in its following work \cite{lian_multiscale_2009}.
Lim et al. \cite{lim_similar_2008} propose SSM-IS which divides long sequences into smaller windows.
Although these techniques are proposed for streaming time series and some of them speed up the distance calculation between the pattern and the candidate, most of them focus on exploring the commonality and correlation among multiple patterns for pruning unmatched pattern candidates, which doesn't aim to reduce the complexity brought by fine-grained single pattern matching problem.

\noindent  \textbf{Subsequence similarity search}.
FRM \cite{faloutsos_fast_1994} is the first work for subsequence similarity search which maps data sequences in database into multidimensional rectangles in feature space.
General Match \cite{moon_general_2002} divides data sequences into generalized sliding windows and the query sequence into generalized disjoint windows, which focuses on estimating parameters to minimize the page access.
Loh et al. \cite{loh_subsequence_2004} propose a subsequence matching algorithm that supports normalization transform.
Lim et al. \cite{lim_using_2006} address this problem by selecting the most appropriate index from multiple indexes built on different windows sizes.
Kotsifakos et al. \cite{kotsifakos_subsequence_2011} propose a framework which allows gaps and variable tolerances in query and candidates.
Wang et al. \cite{wang_data-adaptive_2013} propose DSTree which is a data adaptive and dynamic segmentation index on time series.
This category of researches focuses on indexing the common features of \textit{archived} time series, which is not optimized for pattern matching over stream.

	\section{Conclusions}\label{sec:conclusion}
In this paper, we formulate a new problem, called ``fine-grained pattern matching'', 
which allows users to specify varied granularities of matching deviation for different segments of a given pattern, and fuzzy regions for adaptive breakpoint determination between consecutive segments. 
We propose ELB representation together with BSP optimization to prune sliding windows efficiently, and an adaptive post-processing algorithm which enables us to determine breakpoints in linear complexity. 
We conduct extensive experiments on both synthetic and real-world datasets to illustrate that our algorithm outperforms the baseline solution and prior-arts. 
	
	\balance
	
	\bibliographystyle{abbrv} 
	\bibliography{pattern-m-v0.6}
	
	\begin{appendix}

\section{Proof of Subsequence-based ELB Representation}
\label{proof:seq}
Now We prove that $ ELB_{seq} $ satisfies the lower bounding property as follows:
\begin{proof}
	Given $P$ and $\hat{W}_t$, suppose that $W_{t,i}$ is a fine-grained matching of $P$ where $0 \leqslant i < w$. 
	Let us consider a window block $\hat{W}_{t,j}$ for any $j\in [1,N]$ 
	which is composed of $W_t[j\cdot w - w + 1 : j\cdot w]$
	and aligns with a subsequence  $P[j\cdot w - w + 1-i:j\cdot w - i]$, denoted by $sp_{j\cdot w - i}$ for brevity.
	
	According to Equation~\ref{eq:seq_limit}, there are:
	\begin{displaymath}
	|\mu_{t,j} - \mu_{sp_{j\cdot w-i}}| \leqslant \theta_{seq}(j\cdot w-i)
	\end{displaymath}
	Remove the absolute value symbol:
	\begin{displaymath}
	- \theta_{seq}(j\cdot w-i) \leqslant 
	\mu_{t,j} - \mu_{sp_{j\cdot w-i}}
	 \leqslant \theta_{seq}(j\cdot w-i)
	\end{displaymath}
	Focus on the right inequality, we deduce that:
	\begin{displaymath}
	\mu_{t,j} \leqslant 
	\mu_{sp_{j\cdot w-i}}
	+ \theta_{seq}(j\cdot w-i)
	\end{displaymath}
	Considering the definition of $ U $ (Equation~\ref{eq:avg_ul})
	and $ \hat{P}_j^u $ (Equation~\ref{eq:avg_bound}),
	it holds that:
	\begin{displaymath}
	\mu_{t,j} \leqslant U_{j\cdot w-i}
	\leqslant
	\max \limits_{0 \leqslant a < w} (U_{j\cdot w - a}) = \hat{P}_j^u
	\end{displaymath}
	Similarly we also prove $\mu_{t,j} \geqslant \hat{P}_j^l$. 
	
	The proof is completed.
\end{proof}

\section{Proof of Pattern Transformation}
\label{proof:scheme-3}
To use MSM in our scenario, we adopt a pattern transformation which discards the fuzzy parts of $ P $ and sets the threshold of a segment to be its maximal allowable value. We give a brief proof as follows:
\begin{proof}
	If $ C $ is a fine-grained matching of $ P $, for each $ C_i $, there is:
	\begin{displaymath}
	ED_{norm}(C_k, P_k) \leqslant \varepsilon_k
	\end{displaymath}
	According to Definition~\ref{def:ld_p}, we can deduce that:
	\begin{equation*}
	\begin{split}
	&\frac{1}{bp_{k} - (bp_{k-1}+1)}(\sum^{bp_{k}}_{i=bp_{k-1}+1} |{c_i-p_i}|^2) \leqslant \varepsilon_k^2\\
	&\sum^{l_{k}}_{i=r_{k-1}+1} |{c_i-p_i}|^2 \leqslant \sum^{bp_{k}}_{i=bp_{k-1}+1} |{c_i-p_i}|^2 \leqslant (r_{k}-l_{k-1})\varepsilon_k^2\\
	&ED_{norm}(C'_k, P'_k) \leqslant
	\varepsilon'_k =  \varepsilon_k \sqrt{\frac{r_{k}-l_{k-1}}{l_{k}-r_{k-1}}} 
	= \varepsilon'_k
	\end{split}
	\end{equation*}
	It indicates that pruning phase based on the transformed pattern doesn't cause any false dismissals.
	
	The proof is completed.
\end{proof}

\end{appendix}
\end{document}